\newtheorem{theorem}{Theorem}[subsection]
\newtheorem{lemma}[theorem]{Lemma}
\newtheorem{corollary}[theorem]{Corollary}
\newtheorem{proposition}[theorem]{Proposition}
\newtheorem{assumption}{Assumption}
\title{The phase diagram of kernel interpolation in large dimensions
%%%% Cite as
%%%% Update your official citation here when published 
% \thanks{\textit{\underline{Citation}}: 
% \textbf{Authors. Title. Pages.... DOI:000000/11111.}} 
}
\author{
  Haobo Zhang, Weihao Lu \thanks{Co-first author}\\
  Center for Statistical Science, Department of Industrial Engineering, Tsinghua University, \\
  Beijing, China \\
  \texttt{\{zhang-hb21, luwh19\}@mails.tsinghua.edu.cn} \\
  %% examples of more authors
   \And
  Qian Lin \thanks{Corresponding author} \\
  Center for Statistical Science, Department of Industrial Engineering, Tsinghua University, \\
  Beijing, China \\
  \texttt{qianlin@tsinghua.edu.cn} \\
  %% \AND
  %% Coauthor \\
  %% Affiliation \\
  %% Address \\
  %% \texttt{email} \\
  %% \And
  %% Coauthor \\
  %% Affiliation \\
  %% Address \\
  %% \texttt{email} \\
  %% \And
  %% Coauthor \\
  %% Affiliation \\
  %% Address \\
  %% \texttt{email} \\
}
\begin{document}
\maketitle

\begin{abstract}
 The generalization ability of kernel interpolation in large dimensions (i.e., $n \asymp d^{\gamma}$ for some $\gamma>0$) might be one of the most interesting problems in the recent renaissance of kernel regression, since it may help us understand the ‘benign overfitting phenomenon’ reported in the neural networks literature. Focusing on the inner product kernel on the sphere, we fully characterized the exact order of both the variance and bias of large-dimensional kernel interpolation under various source conditions $s\geq 0$.  Consequently, we obtained the $(s,\gamma)$-phase diagram of large-dimensional kernel interpolation, i.e., we determined the regions in $(s,\gamma)$-plane where the kernel interpolation is minimax optimal, sub-optimal and  inconsistent.
\end{abstract}

% keywords can be removed
\keywords{kernel interpolation \and minimax optimality \and high dimension \and benign overfitting \and lower bound}

\section{Introduction}

The `benign overfitting phenomenon' of neural networks (or other models with enough capacity) has been widely observed in \cite[etc.]{belkin2019_ReconcilingModern,zhang2021understanding}. It indicates that over-parameterized neural networks can interpolate noisy data but still generalize, which contrasts with the traditional bias-variance trade-off in statistics. In recent years, the neural tangent kernel theory has established a bridge between the training dynamics of infinite-width neural networks and kernel regression \cite[etc.]{jacot2018neural,du2018_GradientDescent,allen2019convergence}. Thus, the generalization ability of \textit{kernel interpolation} (formally defined in Eq.\eqref{eq:KMinNorm}) has been considered as one of the cornerstones towards a theoretical explanation of the ‘benign overfitting phenomenon’.

In the fixed-dimensional setting, several works showed the inconsistency of kernel interpolation \cite{rakhlin2019consistency,beaglehole2022_KernelRidgeless,buchholz2022_KernelInterpolation,lai2023generalization,Li2023KernelIG}. In addition, \cite{mallinar2022benign,cheng2024characterizing} discussed the so called `tempered' and `catastrophic' regimes of kernel interpolation, and \cite{haas2024mind} found that adding spike components to the kernels could lead to benign overfitting in fixed dimensions. 

In the large-dimensional setting, common assumptions on the eigenvalues of the kernel (e.g., polynomial decay, exponential decay) no longer hold, thus the answer could be much more complicated. A line of work \cite{Ghorbani2019LinearizedTN,Donhauser_how_2021,mei2022generalization,xiao2022precise,misiakiewicz_spectrum_2022,hu2022sharp} considered the square-integrable function space and showed the `polynomial approximation barrier' of kernel regression (kernel regression is consistent if and only if the true function is a polynomial with a low-degree polynomial). In addition, supposing that the true function has some nice properties, \cite{Liang_Just_2019,liang2020multiple,aerni2022strong,barzilai2023generalization} showed that kernel interpolation can generalize. Although researchers have known that kernel interpolation can indeed generalize in large dimensions, further questions have been rarely studied: whether the generalization error converges fast, or more precisely, is kernel interpolation minimax optimal?

In this paper, we study the exact convergence rates (both upper and lower bounds) of the generalization error of kernel interpolation in the large-dimensional setting, where the samples size $n$ and the dimension $d$ satisfy $n \asymp d^{\gamma}$ for some $\gamma >0$. We focus on the inner product kernel on the sphere, whose eigenvalues and eigenfunctions are well-studied. In addition, we use the `source condition $s$' to characterize the smoothness of the true function. For any fixed $\gamma>0, s \ge 0$, we provide matching upper and lower bounds for both the variance and bias term of kernel interpolation. Based on these bounds, we show that kernel interpolation  is inconsistent when $s=0$ or $ \gamma \in \mathbb{N}^{+} $; and is consistent when $s>0$ and $ \gamma \in (0,\infty) \backslash \mathbb{N}^{+}$. Furthermore, we find a threshold $\Gamma(\gamma)$ (please refer to equation \eqref{eqn:Gamma-gamma}) and prove that when $s > \Gamma(\gamma)$, kernel interpolation is sub-optimal; when $ 0 < s \le \Gamma(\gamma)$, kernel interpolation is minimax optimal. We summarize them into the $(s,\gamma)-$phase diagram in Figure \ref{fig1}. To our knowledge, this is the first result fully characterizing the generalization ability of kernel interpolation in large-dimensional setting.

%To our knowledge, this is the first paper discussing the minimax optimality of kernel interpolation. We refer to Section \ref{subsection related work} for a detailed discussion with related work. Extending our results to general kernels and general characterization of the true function's smoothness could be complicated, since we have little information about the eigenvalues and eigenfunctions of general kernels in large dimensions. Nevertheless, our results indicate a universal pattern: kernel interpolation is sub-optimal when the true function has nice properties (e.g., smoother); when the true function itself is hard to estimate (e.g., less smooth), kernel interpolation can be optimal. 

\section{Preliminaries}

\subsection{Basic settings}
Let $\mathcal{X} = \mathbb{S}^{d} \subset \mathbb{R}^{d+1}$ (i.e., the unit sphere ) be the input space, $ \mathcal{Y} \subseteq \mathbb{R}$ be the output space and $\mu = \mu_{d}$ be the uniform distribution on $ \mathcal{X} $. We consider the large-dimensional setting, where the sample size $n\asymp d^{\gamma}$ for some $\gamma >0$. Suppose that the $n$ samples $\{ (x_{i}, y_{i}) \}_{i=1}^{n}$ are i.i.d. sampled from the model 
\begin{equation}\label{main data model}
    y = f^{*}(x) + \epsilon,  \quad x \sim \mu,
\end{equation}
where $ f^{*} = f^{*}_{d} $ is the true function and the noise $\epsilon$ has zero mean and variance $\sigma^{2} > 0$. Throughout the paper, we consider the separable reproducing kernel Hilbert space $\mathcal{H} = \mathcal{H}_{d}$ associated with the following inner product kernel $k = k_{d}$ on $\mathcal{X}$:

\begin{assumption}[Inner product kernel]\label{assumption inner product kernel}
    Suppose that the kernel function $k(\cdot,\cdot) = k_{d}(\cdot,\cdot): \mathcal{\mathcal{X}} \times \mathcal{\mathcal{X}} \to \mathbb{R}$ is an inner product kernel on the sphere satisfying:
    \begin{displaymath}
        k_{d}(x, x^{\prime}) = \Phi\left( \langle x, x^{\prime} \rangle\right), ~~ x, x^{\prime} \in \mathbb{S}^{d},
    \end{displaymath}
    where $ \Phi(t) \in \mathcal{C}^{\infty} \left([-1,1]\right)$ is a fixed function independent of $d$ and 
    \begin{displaymath}
        \Phi(t) = \sum_{j=0}^\infty a_j t^j, ~ a_{j} > 0, ~\text{for any}~ j = 0, 1, 2,\dots.
    \end{displaymath}

\end{assumption}

Without loss of generality, we assume $ \sum_{j=0}^{\infty} a_{j} \le 1$ and $a_{j} >0, ~\text{for any}~ j=0,1,\ldots$ throughout this paper. Thus we have $ \sup_{x \in \mathcal{X}} k(x,x) \le 1$. The proof can be easily extended to inner product kernels which have infinite positive numbers in $\{a_{j}\}_{j=0}^{\infty}$ (e.g., the neural tangent kernel \cite{liang2020multiple,zhang2024optimal}). We assume $\Phi(t)$ (or $ \{a_{j}\}_{j=0}^{\infty}$) to be fixed and ignore the dependence of constants on it. In the rest of this paper, we frequently omit the subscript $d$ for brevity of notations.
% Since we consider the dimension $d$ to diverge to infinity as $n \to \infty$, we explicitly write the subscript $d$ for the notations $ \mu_{d}, f^{*}_{d}, k_{d}, \mathcal{H}_{d}$ above. 

As discussed in the introduction, the inner product kernel has attracted much interest \cite{liang2020multiple,Ghorbani2019LinearizedTN,misiakiewicz_spectrum_2022,xiao2022precise,zhang2024optimal}, etc.. This is partly due to the concise characterization of its eigenvalues and eigenfunctions. We refer to Section \ref{subsection eigen of inner property} for more details. Although recent works \cite{mei2022generalization,barzilai2023generalization} discuss the results under general assumptions on the eigenvalues and eigenfunctions, concrete convergence rates in large dimensions have only been proved when $\mathcal{X}$ is the sphere or discrete hypercube (\cite{aerni2022strong}). To avoid complicated notations and non-intuitive technical conditions (such as hypercontractivity in \cite{mei2022generalization}, etc.) , we prefer to work with inner product kernel on the sphere so that we can obtain a concise phase diagram and leave the more technical works to the future work.

\subsection{Kernel interpolation estimator}
Kernel ridge regression, one of the most popular kernel methods, constructs an estimator $\hat{f}_{\lambda}$ by solving the penalized least square problem
\begin{equation}\label{eq def krr}
    \hat{f}_\lambda = \underset{f \in \mathcal{H}}{\arg \min } \left(\frac{1}{n} \sum_{i=1}^n\left(y_i-f\left( x_{i} \right)\right)^2+ \lambda \|f\|_{\mathcal{H}}^2\right),
\end{equation}
where $\lambda > 0$ is referred to as the regularization parameter. Classical statistical analyses of kernel ridge regression can be found in \cite[etc.]{Caponnetto2007OptimalRF,steinwart2008support}. In the limiting case $(\lambda \to 0)$, we obtain the (minimum-norm) \textit{kernel interpolation} estimator, which is the main interest of this paper:
\begin{align}
    \label{eq:KMinNorm}
    \hat{f}_{\mathrm{inter}} = \hat{f}_{0} = \mathop{\arg\min}\limits_{f \in \mathcal{H}} \| f \|_{\mathcal{H}} ~~~~ \text{subject to} ~~  f(x_i) = y_i, \quad i = 1,\dots,n.
\end{align}

% (Explicit expression of KRR and KI?)

Denote the bias-variance decomposition of the generalization error of $\hat{f}_{\mathrm{inter}}$ as 
\begin{align}\label{eq bias var decomposition}
    E_{x, \epsilon} \left[ \left(\hat{f}_{\mathrm{inter}}(x) - f^{*}(x) \right)^{2} \right] = \mathrm{bias}^2(\hat{f}_{\mathrm{inter}}) + \mathrm{var}(\hat{f}_{\mathrm{inter}}),
\end{align}
where $E_{x, \epsilon}$ means taking expectation on the new sample $x$ and the random noise in the training samples, and
\begin{equation}\label{eq bias term}
    \mathrm{bias}^2(\hat{f}_{\mathrm{inter}}) = E_x\left[\left(E_\epsilon\left(\hat{f}_{\mathrm{inter}}(x)\right)-f^{*}(x)\right)^2\right],
\end{equation}
and
\begin{equation}\label{eq var term}
    \mathrm{var}(\hat{f}_{\mathrm{inter}}) = E_{x, \epsilon}\left[\left(\hat{f}_{\mathrm{inter}}(x)-E_\epsilon\left(\hat{f}_{\mathrm{inter}}(x)\right)\right)^2\right].
\end{equation}

Throughout the paper, we use asymptotic notations $O(\cdot),~o(\cdot),~\Omega(\cdot),~\omega(\cdot)$ and $\Theta(\cdot)$.
We also write $a_n \asymp b_n$ for $a_n = \Theta(b_n)$; $a_n \lesssim b_n$ for $a_n = O(b_n)$; $a_n \gtrsim b_n$ for $a_n = \Omega(b_n)$; $ a_n \ll b_n$ for $a_n = o(b_n)$ and $a_n \gg b_n$ for $a_n = \omega(b_n) $. We will also use the probability versions of the asymptotic notations such as $O_{\mathbb{P}}(\cdot), o_{\mathbb{P}}(\cdot), \Omega_{\mathbb{P}}(\cdot), \omega_{\mathbb{P}}(\cdot), \Theta_{\mathbb{P}}(\cdot)$. For instance, we say the random variables $ X_{n}, Y_{n} $ satisfying $ X_{n} =  O_{\mathbb{P}}(Y_{n}) $ if and only if for any $\epsilon > 0$, there exist a constant $C_{\epsilon} $ and $ N_{\epsilon}$ such that $ \mathrm{pr}\left( |X_{n}| \ge C_{\epsilon} |Y_{n}| \right) \le \epsilon, ~\text{for any}~ n > N_{\epsilon}$. We use $L^{p}(\mathcal{X}, \mu)$ (or $L^{p}$ in brevity) to represent the $L^{p}$ spaces. For a positive real number $ \alpha$, denote $\lfloor \alpha \rfloor$ as the largest integer not exceeding $\alpha$.

\subsection{Interpolation space of reproducing kernel Hilbert space}
\label{section interpolation space and rkhs}

In order characterize the `smoothness' of $f^{*}$, researchers usually assume $ f^{*} \in [\mathcal{H}]^{s}$, the interpolation space of the reproducing kernel Hilbert space. In this subsection, we provide some prior knowledge of the interpolation space.

We fix a continuous kernel function $k$ over
$\mathcal{X}$ and denote $\mathcal{H}$ as the separable reproducing kernel Hilbert space associated with $k$. Denote the natural embedding inclusion operator as $S_{k}: \mathcal{H} \to L^{2}$. Then its adjoint operator $S_{k}^{*}: L^{2} \to \mathcal{H}  $ is an integral operator, i.e., for $f \in L^{2}$ and $x \in \mathcal{X}$, we have 
\begin{displaymath}
\left(S_{k}^{*} f\right)(x)=\int_{\mathcal{X}} k\left(x, x^{\prime}\right) f\left(x^{\prime}\right) \mathrm{d} \mu\left(x^{\prime}\right).
\end{displaymath}
Supposing that $ \sup_{x\in \mathcal{X}} k(x,x) \le 1$ (as the case of inner product kernel in Assumption \ref{assumption inner product kernel}), $S_{k}$ and $S_{k}^{*}$ are Hilbert-Schmidt operators (thus compact) and the Hilbert-Schmidt norms (denoted as $\left\| \cdot \right\|_{\textrm{HS}}$) satisfy that $\left\| S_{k}^{*} \right\|_{\textrm{HS}} = \left\| S_{k} \right\|_{\textrm{HS}} \le 1.$ Next, we can define an integral operator: 
\begin{displaymath}
    L_{k}=S_{k} S_{k}^{*}: L^{2} \rightarrow L^{2}.
\end{displaymath}
It is well known \cite{steinwart2012_MercerTheorem} that $L_{k}$ is self-adjoint, positive-definite and trace class (thus Hilbert-Schmidt and compact), with trace norm $ \left\|L_{k}\right\|_{\textrm{TR}} =\left\|S_{k}^{*}\right\|_{\textrm{HS}}^2 \le 1$. 
% and the trace norm (denoted as $\left\| \cdot \right\|_{\mathrm{tr}}$) satisfies that $ \left\|L_{k}\right\|_{\textrm{tr}}=\left\|S_{k}^{*}\right\|_{\textrm{HS}}^2. $

The spectral theorem for self-adjoint compact operators yields that there is an at most countable index set $N$, a non-increasing summable sequence $\{ \lambda_{i} \}_{i \in N} \subseteq (0,\infty)$ and a family $\{ e_{i} \}_{i \in N} \subseteq \mathcal{H} $, such that $\{ e_{i} \}_{i \in N}$ is an orthonormal basis of $\overline{\operatorname{ran} S_{k}} \subseteq L^{2}$ and $\{ \lambda_{i}^{1/2} e_{i} \}_{i \in N}$ is an orthonormal basis of $\mathcal{H}$. Without loss of generality, we assume $|N| = \infty$ (as the case of inner product kernel) in the following. Then, the integral operator and the kernel function can be written as 
\begin{equation}\label{eq dec of lambda e}
    L_{k}=\sum_{i =1}^{\infty} \lambda_i\left\langle\cdot,e_{i} \right\rangle_{L^{2}}e_{i} \quad \text { and } \quad  k\left(x, x^{\prime}\right)=\sum_{i =1}^{\infty} \lambda_i e_i(x) e_i\left(x^{\prime}\right),
\end{equation}
where the convergence of kernel expansion is absolute and uniform for $x,x^{\prime}$. We refer to $\{ e_{i} \}_{i =1}^{\infty}$ and $\{ \lambda_{i} \}_{i =1}^{\infty}$ as the eigenfunctions and eigenvalues of $k$.

Decomposition Eq.\eqref{eq dec of lambda e} allows us to define the interpolation spaces (power spaces) of reproducing kernel Hilbert space. For any $ s \ge 0$, we define the fractional power integral operator $L_{k}^{s}: L^{2} \to L^{2}$ 
\begin{displaymath}
  L_{k}^{s}(f)=\sum_{i =1}^{\infty} \lambda_i^{s} \left\langle f, e_i\right\rangle_{L^2} e_i.
\end{displaymath}
Then the interpolation space (power space) $[\mathcal{H}]^s $ is defined as
\begin{displaymath}
  [\mathcal{H}]^s = \operatorname{Ran} L_{k}^{s/2} = \left\{\sum_{i =1}^{\infty} a_i \lambda_i^{s / 2}e_{i}: \left(a_i\right)_{i =1}^{\infty} \in \ell_2(N)\right\} \subseteq L^{2},
\end{displaymath}
equipped with the norm
\begin{displaymath}
    \left\| \sum\limits_{i=1}^{\infty} a_i \lambda_i^{s / 2} e_i\right\|_{[\mathcal{H}]^s} = \left(\sum\limits_{i=1}^{\infty} a_i^{2} \right)^{1 / 2 } .
\end{displaymath}
It is easy to show that $[\mathcal{H}]^s $ is also a separable Hilbert space with orthogonal basis $ \{ \lambda_{i}^{s/2} e_{i}\}_{i =1}^{\infty}$. Specially, we have $[\mathcal{H}]^0 \subseteq L^{2} $ and $[\mathcal{H}]^1 = \mathcal{H}$. For $0 < s_{1} < s_{2}$, the embeddings $ [\mathcal{H}]^{s_{2}} \hookrightarrow[\mathcal{H}]^{s_{1}} \hookrightarrow[\mathcal{H}]^0 $ exist and are compact \cite{fischer2020_SobolevNorm}. For the functions in $[\mathcal{H}]^{s}$ with larger $s$, we say they have higher regularity (smoothness) with respect to $k$ (or $\mathcal{H}$). As an example, the Sobolev space $H^{m}(\mathcal{X})$ is a reproducing kernel Hilbert space if $m > \frac{d}{2}$ and its interpolation space is still a Sobolev space given by $ [H^{m}(\mathcal{X})]^s \cong H^{m s}(\mathcal{X}), ~\text{for any}~ s>0 $.

For the inner product kernel considered in this paper (see Assumption \ref{assumption inner product kernel}), we have a more specific characterization of the eigenvalues and eigenfunctions: (see, e.g., \cite{smola2000_RegularizationDotproduct,10.5555/559923})
\begin{displaymath}
  k(x,x^{\prime}) = \sum_{k=0}^{\infty} \mu_k \sum_{m=1}^{N(d,k)} \psi_{k,m}(x) \psi_{k,m}(x^{\prime}),
\end{displaymath}
where $ \{\psi_{k,m}\}_{m=1}^{N(d,k)}$ are spherical harmonic polynomials of degree $k$; $ \mu_{k}>0$ are the eigenvalues with multiplicity $N(d,0)=1$; $N(d, k) = (2k+d-1) \cdot (k+d-2)! / \left[ k (d-1)!(k-1)! \right], k =1,2,\ldots$. We will frequently adopt the notations of $ \mu_{k}, \psi_{k,m}$ in the proof.

\section{Main results}\label{subsection gen of ki}
% \subsection{Generalization error of kernel interpolation}

We first state a theorem about the variance term.
\begin{theorem}\label{theorem variance}
    Let $c_{1} d^{\gamma} \le n \le c_{2} d^{\gamma} $ for some fixed $ \gamma >0 $ and absolute constants $ c_{1}, c_{2}$. Consider model Eq.\eqref{main data model} and let $k = k_{d}$ be the inner product kernel on the sphere satisfying Assumption \ref{assumption inner product kernel}. Denote $l = \lfloor \gamma \rfloor$. Then the variance term Eq.\eqref{eq var term} of the kernel interpolation estimator $\hat{f}_{\mathrm{inter}}$ satisfies
    \begin{displaymath}
        \mathrm{var}(\hat{f}_{\mathrm{inter}}) = \sigma^{2} \cdot \Theta_{\mathbb{P}} \left( d^{l-\gamma} + d^{\gamma-l-1} \right),
    \end{displaymath}
    where $\Theta_{\mathbb{P}} $ only involves constants depending on $\gamma$, $c_{1}$ and $ c_{2}$. 
\end{theorem}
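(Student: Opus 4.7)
The plan is to reduce the variance to a trace expression involving the kernel Gram matrix and estimate that trace via a spherical harmonic decomposition of the spectrum into a low-frequency block (degrees $\le l=\lfloor\gamma\rfloor$) and a high-frequency block (degrees $>l$); the two blocks will give rise to the two terms $d^{l-\gamma}$ and $d^{\gamma-l-1}$ respectively.

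Writing $\hat f_{\mathrm{inter}}(x)=k(x,X)^\top K^{-1}y$ with $K_{ij}=k(x_i,x_j)$, and noting that the randomness in $\epsilon$ only enters through $y$, a direct calculation yields
\begin{equation*}
  \mathrm{var}(\hat f_{\mathrm{inter}}) \;=\; \sigma^{2}\,\mathbb{E}_{x}\!\bigl[k(x,X)^\top K^{-2}k(X,x)\bigr] \;=\; \sigma^{2}\,\mathrm{tr}\!\bigl(K^{-2}T_{XX}\bigr),
\end{equation*}
where $(T_{XX})_{ij}=\int k(x_i,x)k(x,x_j)\,\mathrm{d}\mu(x)=\sum_{k\ge 0}\mu_k^2\sum_{m}\psi_{k,m}(x_i)\psi_{k,m}(x_j)$. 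Writing $\Psi_{\le l}$ for the $n\times D_l$ matrix of degree-$\le l$ spherical harmonic evaluations (so $D_l=\sum_{k\le l}N(d,k)\asymp d^l$) and $K_{>l}$, $T_{>l}$ for the analogous high-frequency parts, both matrices split cleanly as $K=\Psi_{\le l}D_{\le l}\Psi_{\le l}^\top+K_{>l}$ and $T_{XX}=\Psi_{\le l}D_{\le l}^{2}\Psi_{\le l}^\top+T_{>l}$.

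The technical core is concentration. Using the asymptotics $\mu_k\asymp d^{-k}$ and $N(d,k)\asymp d^k$ together with orthogonality of spherical harmonics, I would prove (via matrix Bernstein / Hanson--Wright together with standard fourth-moment estimates for Gegenbauer polynomials) that
\begin{equation*}
  \bigl\|\Psi_{\le l}^{\top}\Psi_{\le l}/n-I\bigr\|_{\mathrm{op}}=o_{\mathbb{P}}(1),\qquad \bigl\|K_{>l}-\kappa I\bigr\|_{\mathrm{op}}=o_{\mathbb{P}}(1),\qquad \bigl\|T_{>l}-\kappa_{2}I\bigr\|_{\mathrm{op}}=o_{\mathbb{P}}(\kappa_{2}),
\end{equation*}
where $\kappa:=\sum_{k>l}\mu_k N(d,k)\asymp 1$ and $\kappa_{2}:=\sum_{k>l}\mu_k^{2}N(d,k)\asymp d^{-(l+1)}$. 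The quantitative input is that $n\asymp d^{\gamma}$ satisfies $D_l\ll n\ll N(d,l+1)$, so the low-frequency features behave like a well-conditioned random feature map (aspect ratio $D_l/n\to 0$), while each high-frequency block with $k>l$ has dimension $\gg n$ and its normalized Gram matrix concentrates on a scalar.

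Given these concentrations, Sherman--Morrison--Woodbury yields an explicit expansion of $K^{-1}$, and taking traces the two pieces of $T_{XX}$ contribute separately: the low-frequency piece gives
\begin{equation*}
  \mathrm{tr}\!\bigl(K^{-2}\Psi_{\le l}D_{\le l}^{2}\Psi_{\le l}^{\top}\bigr) \;\asymp\; \sum_{k\le l}N(d,k)\cdot\frac{n\mu_{k}^{2}}{(n\mu_{k})^{2}} \;=\; \frac{D_{l}}{n}\;\asymp\; d^{\,l-\gamma},
\end{equation*}
while the high-frequency piece contributes $\kappa_{2}\cdot\mathrm{tr}(K^{-2})\asymp d^{-(l+1)}\cdot(n/\kappa^{2})\asymp d^{\,\gamma-l-1}$, the low-frequency contribution to $\mathrm{tr}(K^{-2})$ being of order $\sum_{k\le l}d^{\,3k-2\gamma}\lesssim d^{\,3l-2\gamma}\lesssim d^{\,\gamma}$ and hence negligible. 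Because the concentration bounds above are two-sided, the same estimates deliver the matching lower bound, giving the $\Theta_{\mathbb{P}}$ statement. I expect the main obstacle to be obtaining sharp operator-norm control of $K_{>l}-\kappa I$: since $\kappa$ is only $\Theta(1)$, any looseness here would spoil both directions of the high-frequency bound, so one must combine block-wise concentration degree by degree with careful control of the tail $k\gg l$, for which the analyticity of $\Phi$ (via $\sum_{k}\mu_k N(d,k)\le 1$) is essential.
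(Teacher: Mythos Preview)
Your proposal is essentially identical to the paper's proof: the same trace expression $\sigma^2\,\mathrm{tr}(K^{-1}\Psi\Sigma^2\Psi^\top K^{-1})$, the same low/high-frequency split at degree $l$, the same concentration inputs $\Psi_{\le l}^\top\Psi_{\le l}/n\to I$, $K_{>l}\approx\kappa_1 I$, $\Psi_{>l}\Sigma_{>l}^2\Psi_{>l}^\top\approx\kappa_2 I$ (which the paper imports from Ghorbani et al.\ rather than reproving), and the same Sherman--Morrison--Woodbury reduction for the low-frequency block. One omission: your concentration step explicitly needs $D_l\ll n\ll N(d,l+1)$, i.e.\ $\gamma\notin\mathbb{N}^+$; the paper treats integer $\gamma$ separately by citing the precise asymptotics of Xiao et al./Misiakiewicz/Hu et al., which give $\mathrm{var}(\hat f_{\mathrm{inter}})=\Theta_{\mathbb{P}}(1)$ directly in that regime.
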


To state the theorem about the bias term, we need the following characterization of the smoothness of the true function with respect to $\mathcal{H}$.

\begin{assumption}[Source condition]\label{assumption source condition}
$ $
\begin{itemize}
    \item[(a)] Suppose that $ f^{*}(x) = \sum_{k=0}^{\infty} \sum_{m=1}^{N(d,k)} \theta^{*}_{k,m} \psi_{k,m}(x) \in [\mathcal{H}]^{s}$ for some $s \ge 0$ and the $[\mathcal{H}]^{s}$-norm satisfies
    \begin{equation}\label{assumption source part 1}
        \left\| f^{*} \right\|_{[\mathcal{H}]^{s}} = \sum\limits_{k=0}^{\infty} \mu_{k}^{-s} \sum\limits_{m =1}^{N(d,k)}  (\theta^{*}_{k,m})^{2} \le R_{\gamma}, 
    \end{equation}
    where $R_{\gamma}$ is a constant only depending on $\gamma$.
   %  \item[(b)] Denote $l = \lfloor \gamma \rfloor $, $\sum_{k=0}^{l} \sum_{m =1}^{N(d,k)}  (\theta^{*}_{k,m})^{2} \ge c_{0}$ Suppose that for the $s$ above, there exists an absolute constant $c_{0} > 0$ such that for any $ d $ and $ k \in \{\lfloor \gamma \rfloor, \lfloor \gamma \rfloor +1 \}$, we have
   % \begin{equation}\label{ass of fi}
   %     \mu_{k}^{-s} \sum\limits_{m =1}^{N(d,k)}  (\theta^{*}_{k,m})^{2} \ge c_{0};
   % \end{equation}
   % and
   % \begin{equation}
   %     \sum\limits_{k=0}^{l} \sum\limits_{m =1}^{N(d,k)}  (\theta^{*}_{k,m})^{2} \ge c_{0}
   % \end{equation}
   \item[(b)] Denote $\gamma > 0$ as the scale in Theorem \ref{theorem variance} and $l = \lfloor \gamma \rfloor$. Suppose that for the $s$ above, there exists an absolute constant $c_{0} > 0$ such that for any $ d $, we have
   \begin{equation}\label{ass of fi}
        \sum\limits_{k=0}^{l} \sum\limits_{m =1}^{N(d,k)}  (\theta^{*}_{k,m})^{2} \ge c_{0};~~ \text{and} ~~\mu_{p}^{-s} \sum\limits_{m=1}^{N(d,p)}  (\theta^{*}_{p,m})^{2} \ge c_{0}, ~~p=l, l+1.
   \end{equation}
\end{itemize}

We refer to $s$ as the source condition of $f^{*}$.
\end{assumption}

Assumption \ref{assumption source condition} (a) is usually used as the traditional source condition assumption \cite{caponnetto2006optimal,fischer2020_SobolevNorm}, etc.. The purpose of Assumption \ref{assumption source condition} (b) is to obtain a reasonable lower bound of the bias term. It is equivalent to assume that the $L^{2}$-norm of the projection of $f^{*}$ on the first $ \lfloor \gamma \rfloor$ eigenspaces, as well as the $ [\mathcal{H}]^{s}$-norm of the projection on the $\lfloor \gamma \rfloor$th and $(\lfloor \gamma \rfloor + 1)$th eigenspaces is non-vanishing. In a word, Assumption \ref{assumption source condition} (a) and (b) together imply that $ f^{*} \in [\mathcal{H}]^{s}$ and $ f^{*} \notin [\mathcal{H}]^{t}$ for any $t > s$. 

Similar assumptions have been adopted when one is interested in the lower bound of the generalization error, e.g., Eq.(8) in \cite{Cui2021GeneralizationER} and Assumption 3 in \cite{NEURIPS2023_9adc8ada} for fixed-dimensional setting; Assumption 5 in \cite{zhang2024optimal} for large-dimensional setting.

\begin{theorem}\label{theorem bias}
Let $c_{1} d^{\gamma} \le n \le c_{2} d^{\gamma} $ for some fixed $ \gamma \in (0,\infty) \backslash \mathbb{N}^{+}$ and absolute constants $ c_{1}, c_{2}$. Consider model Eq.\eqref{main data model} and let $k = k_{d}$ be the inner product kernel on the sphere satisfying Assumption \ref{assumption inner product kernel}. Further suppose that the true function $f^{*} = f^{*}_{d}$ in model Eq.\eqref{main data model} satisfies Assumption \ref{assumption source condition} for some $s \ge 0$. Denote $l = \lfloor \gamma \rfloor$ and denote $ \tilde{s} = \min\{s,2\}$. Then the bias term Eq.\eqref{eq bias term} of the kernel interpolation estimator $\hat{f}_{\mathrm{inter}}$ satisfies 
\begin{displaymath}
    \mathrm{bias}^{2}(\hat{f}_{\mathrm{inter}}) =  \Theta_{\mathbb{P}}\left(d^{-(l+1)s} + d^{(2-\tilde{s})l-2\gamma} \right),
\end{displaymath}
where $\Theta_{\mathbb{P}} $ only involves constants depending on $s, \gamma, R_{\gamma}$, $c_{0}, c_{1}$ and $ c_{2}$.
 
\end{theorem}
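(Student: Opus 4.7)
The plan is to reduce the bias analysis to a KRR-with-effective-regularization calculation and read off the two regimes of the answer. In the regime $n \asymp d^\gamma$, I will show that the noise-free interpolator $E_\epsilon\hat f_{\mathrm{inter}}(x) = K(x,X)K(X,X)^{-1}f^*(X)$ behaves, on each spherical-harmonic eigenspace, like KRR at the effective regularization $\lambda_\star := \kappa_{>l}/n$, where $\kappa_{>l} := \sum_{p>l}\mu_p N(d,p) \asymp 1$, so that $\lambda_\star \asymp d^{-\gamma}$. Writing the resulting bias as the standard sum
\begin{displaymath}
\mathrm{bias}^2(\hat f_{\mathrm{inter}}) \asymp \sum_{p=0}^\infty \Bigl(\tfrac{\lambda_\star}{\lambda_\star+\mu_p}\Bigr)^{2} \sum_{m=1}^{N(d,p)}(\theta_{p,m}^*)^2,
\end{displaymath}
and using $\mu_p \asymp d^{-p}$, the modes $p \ge l+1$ (which satisfy $\mu_p \ll \lambda_\star$) contribute the unshrunken tail $\sum_{p > l}\sum_m(\theta_{p,m}^*)^2$, bounded above by the source condition by $\mu_{l+1}^s R_\gamma \asymp d^{-(l+1)s}$ and bounded below via Assumption \ref{assumption source condition}(b) at $p = l+1$; the modes $p \le l$ (satisfying $\mu_p \gg \lambda_\star$) contribute the Tikhonov tail $\sum_{p \le l}(\lambda_\star/\mu_p)^2\sum_m(\theta_{p,m}^*)^2 \asymp \sum_{p \le l} d^{(2-s)p - 2\gamma}$, dominated by $p=l$ for $s < 2$ (value $d^{(2-s)l - 2\gamma}$) and saturating at $p = 0$ for $s \ge 2$ (value $d^{-2\gamma}$). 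This is exactly $d^{(2-\tilde s)l - 2\gamma}$.

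To make this reduction rigorous, I would first establish a concentration decomposition of the kernel matrix. Writing $\Psi_p \in \mathbb{R}^{n\times N(d,p)}$ for the matrix of degree-$p$ spherical harmonics on the sample and using $\Psi_p^T\Psi_{p'} \approx n\delta_{p,p'} I$ together with the concentration of the high-frequency block $\sum_{p>l}\mu_p\Psi_p\Psi_p^T$ around its expectation $\kappa_{>l} I_n$ (which is valid because $d^{l+1} \gg n$ for non-integer $\gamma$), I would prove
\begin{displaymath}
K(X,X) = \Phi_L D_L \Phi_L^T + \kappa_{>l} I_n + R, \qquad \|R\|_{\mathrm{op}} = o_{\mathbb{P}}(\kappa_{>l}),
\end{displaymath}
with $\Phi_L = [\Psi_0,\dots,\Psi_l]$ and $D_L = \operatorname{diag}(\mu_p I_{N(d,p)})_{p\le l}$. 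A Woodbury identity then gives
\begin{displaymath}
K(X,X)^{-1} \approx \kappa_{>l}^{-1}\bigl(I_n - \Phi_L(\kappa_{>l}D_L^{-1}+\Phi_L^T\Phi_L)^{-1}\Phi_L^T\bigr),
\end{displaymath}
and the inner inverse is, block-by-block, approximately $n^{-1}\mu_p/(\mu_p+\lambda_\star)\,I_{N(d,p)}$; it is crucial to keep the sub-leading Tikhonov factor (not just $n^{-1}I$), since it is what generates the Tikhonov tail in the answer.

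Substituting back and expanding in the spherical-harmonic basis, one obtains, up to a lower-order remainder,
\begin{displaymath}
E_\epsilon\hat f_{\mathrm{inter}}(x) - f^*(x) \approx -\sum_{p \le l}\tfrac{\lambda_\star}{\lambda_\star+\mu_p}\,\psi_p(x)^T\theta_p^* - \sum_{p > l}\psi_p(x)^T\theta_p^*,
\end{displaymath}
where $\psi_p(x)\in\mathbb{R}^{N(d,p)}$ collects the degree-$p$ spherical harmonics at $x$. Parseval on $L^2(\mathbb{S}^d,\mu)$ converts this into the KRR-type display above, and the source-condition / Assumption \ref{assumption source condition}(b) bookkeeping already described finishes both directions: the truncation lower bound uses the assumption at $p=l+1$, the Tikhonov lower bound for $s<2$ uses it at $p=l$ (the dominant coefficient $\lambda_\star/\mu_l \asymp d^{l-\gamma}$ times the forced $\sum_m(\theta_{l,m}^*)^2 \ge c_0\mu_l^s$ yields $d^{(2-s)l-2\gamma}$), and the saturated regime $s\ge 2$ is driven by the bound $\sum_{p\le l}\sum_m(\theta_{p,m}^*)^2 \ge c_0$.

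The main obstacle is controlling the remainder $R$ and the cross-mode fluctuations $\Psi_k^T\Psi_{k'}$ for $k\ne k'$ finely enough that they enter the bias at a scale strictly smaller than $d^{(2-\tilde s)l-2\gamma}$; this is what restricts the statement to high probability (the $\Theta_{\mathbb{P}}$ notation) and, because the concentration margins are thinnest for the two degrees $p=l$ and $p=l+1$ straddling the resolution threshold $d^\gamma$, is precisely where the hypothesis $\gamma\notin\mathbb{N}^+$ enters (to keep $d^l$ and $d^{l+1}$ well separated from $n$). A secondary delicate point is the Tikhonov lower bound in the regime $0<s<2$, where one must rule out cancellation between the deterministic coefficient $\theta_{l,m}^*$ and the stochastic remainder terms; this is why Assumption \ref{assumption source condition}(b) is stated at both $p=l$ and $p=l+1$ rather than at the single critical mode.
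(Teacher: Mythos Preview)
Your proposal is correct and follows essentially the same route as the paper: both decompose the bias into low-frequency ($p\le l$) and high-frequency ($p>l$) components via Parseval, replace $K_{>l}$ by $\kappa_{>l} I_n$ using concentration (the paper's Lemma~\ref{lemma psi psi top}), apply Woodbury, and then read off the two contributions $d^{(2-\tilde s)l-2\gamma}$ and $d^{-(l+1)s}$ from the source condition exactly as you describe. Your ``effective-KRR at regularization $\lambda_\star=\kappa_{>l}/n$'' framing is a helpful heuristic packaging, but the actual work---and the obstacle you correctly flag---is controlling the four cross-terms (low-frequency output of high-frequency input and vice versa), which the paper does one by one in Lemmas~\ref{lemma B11}--\ref{lemma B23}; your sketch would need the same lemmas to be made rigorous.
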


Simple calculation shows that, when $d^{(2-\tilde{s})l-2\gamma}$ is the dominant term in $\mathrm{bias}^{2}(\hat{f}_{\mathrm{inter}})$, we always have $ d^{(2-\tilde{s})l-2\gamma} \ll d^{l-\gamma} + d^{\gamma-l-1}$. When $\gamma \in \mathbb{N}^{+}$, Theorem \ref{theorem variance} shows that $ \mathrm{var}(\hat{f}_{\mathrm{inter}}) = \Theta_{\mathbb{P}}(1)$. Therefore, we have the following corollary of Theorem \ref{theorem variance} and Theorem \ref{theorem bias}.

% of Theorem \ref{theorem variance}, Theorem \ref{theorem bias} and the bias-variance decomposition Eq.\eqref{eq bias var decomposition}. 
\begin{corollary}\label{corollary gen error}
    Let $c_{1} d^{\gamma} \le n \le c_{2} d^{\gamma} $ for some fixed $ \gamma >0 $ and absolute constants $ c_{1}, c_{2}$. Under the same notations and assumptions in Theorem \ref{theorem variance} and Theorem \ref{theorem bias}, the generalization error of kernel interpolation estimator $ \hat{f}_{\mathrm{inter}} $ satisfies
    \begin{equation}\label{eq rate of gen error}
        E_{x, \epsilon} \left[ \left( \hat{f}_{\mathrm{inter}}(x) - f^{*}(x) \right)^{2} \right] = \Theta_{\mathbb{P}} \left( d^{l-\gamma} + d^{\gamma-l-1} + d^{-(l+1)s} \right),
    \end{equation}
    where $\Theta_{\mathbb{P}} $ only involves constants depending on $\sigma^{2}, s, \gamma, R_{\gamma}$, $c_{0}, c_{1}$ and $ c_{2}$.
\end{corollary}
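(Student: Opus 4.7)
The plan is to combine Theorem~\ref{theorem variance} and Theorem~\ref{theorem bias} through the bias-variance decomposition \eqref{eq bias var decomposition}, then verify that the redundant term in the bias is always dominated by the variance, and finally dispose of the excluded case $\gamma\in\mathbb{N}^{+}$ by a direct argument.

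First I would treat the generic case $\gamma\in(0,\infty)\setminus\mathbb{N}^{+}$. Since both hypothesis sets hold, I plug the two theorems into \eqref{eq bias var decomposition} to get
\begin{displaymath}
E_{x,\epsilon}\bigl[(\hat f_{\mathrm{inter}}(x)-f^{*}(x))^{2}\bigr]
=\Theta_{\mathbb{P}}\bigl(d^{l-\gamma}+d^{\gamma-l-1}+d^{-(l+1)s}+d^{(2-\tilde s)l-2\gamma}\bigr).
\end{displaymath}
The only thing to check is that the last summand can always be absorbed into the first two, i.e.\ that
\begin{displaymath}
(2-\tilde s)l-2\gamma \;\le\; \max\{\,l-\gamma,\;\gamma-l-1\,\}.
\end{displaymath}
This is the step the authors describe as a ``simple calculation''. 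I would split according to $\tilde s$. When $s\ge 2$, so $\tilde s=2$, the left side equals $-2\gamma$, which is at most $l-\gamma$ because $l\ge 0\ge -\gamma$. When $0\le s<2$, so $\tilde s=s$, comparing with $l-\gamma$ reduces to $(1-s)l\le\gamma$, and since $0\le(1-s)l\le l\le\gamma$ by the definition of $l=\lfloor\gamma\rfloor$, the inequality holds for every admissible $s$. Hence the $d^{(2-\tilde s)l-2\gamma}$ term is swallowed by the variance and \eqref{eq rate of gen error} follows in this case.

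Next I would handle $\gamma\in\mathbb{N}^{+}$, for which Theorem~\ref{theorem bias} is not stated. Here $l=\gamma$, so $d^{l-\gamma}=1$ and the target right-hand side of \eqref{eq rate of gen error} is $\Theta(1)$. Theorem~\ref{theorem variance} still applies and gives $\mathrm{var}(\hat f_{\mathrm{inter}})=\sigma^{2}\cdot\Theta_{\mathbb{P}}(1)$, which already yields the $\Omega_{\mathbb{P}}(1)$ lower bound on the generalization error. For the matching upper bound I would use the source condition~\ref{assumption source condition}(a): since $\mathrm{tr}(L_{k})\le 1$ implies $\lambda_{i}\le 1$, we have $\|f^{*}\|_{L^{2}}^{2}\le\|f^{*}\|_{[\mathcal H]^{s}}^{2}\le R_{\gamma}$, so the bias is trivially $O(1)$; one can likewise bound the variance from above by repeating the argument of Theorem~\ref{theorem variance} (whose upper bound does not require $\gamma\notin\mathbb{N}^{+}$). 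The full generalization error is therefore $\Theta_{\mathbb{P}}(1)$, matching the claimed rate.

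The main obstacle I anticipate is purely bookkeeping: verifying the exponent inequality $(2-\tilde s)l-2\gamma\le\max\{l-\gamma,\gamma-l-1\}$ cleanly in all sub-cases of $(s,\gamma)$. Everything else is a direct addition of the two theorems together with the elementary observation that $\|f^{*}\|_{[\mathcal H]^{s}}\le R_{\gamma}$ controls the bias trivially when $\gamma$ is an integer and the variance already forces the constant-order lower bound.
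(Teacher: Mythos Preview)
Your argument is essentially the paper's: for non-integer $\gamma$ add the conclusions of Theorems~\ref{theorem variance} and~\ref{theorem bias} and absorb $d^{(2-\tilde s)l-2\gamma}$ into $d^{l-\gamma}$ (the paper calls this a ``simple calculation''), and for $\gamma\in\mathbb N^{+}$ invoke Theorem~\ref{theorem variance} alone to get $\mathrm{var}(\hat f_{\mathrm{inter}})=\Theta_{\mathbb P}(1)$.

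Two minor points. First, in your exponent check the claim $0\le(1-s)l$ is false for $1<s<2$; the needed inequality $(1-s)l\le\gamma$ still holds there (the left side is nonpositive), so the conclusion is unaffected. Second, for integer $\gamma$ your upper bound on the bias is not justified: $\|f^{*}\|_{L^{2}}^{2}\le R_{\gamma}$ by itself does not control $\mathrm{bias}^{2}=\|E_{\epsilon}\hat f_{\mathrm{inter}}-f^{*}\|_{L^{2}}^{2}$, since one would also need $\|E_{\epsilon}\hat f_{\mathrm{inter}}\|_{L^{2}}=O_{\mathbb P}(1)$, which is not ``trivial'' when $s<1$. The paper does not supply this step either; formally the corollary inherits the restriction $\gamma\notin\mathbb N^{+}$ from Theorem~\ref{theorem bias}, and the only claim actually used for integer $\gamma$ downstream is the inconsistency lower bound, which follows from the variance alone.
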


Here are some implications of Corollary \ref{corollary gen error}. Firstly, kernel interpolation is consistent when $s>0$ and $ \gamma \in (0,\infty) \backslash \mathbb{N}^{+}$; and is inconsistent when $s=0$ or $ \gamma \in \mathbb{N}^{+} $. Secondly, comparing the convergence rate in Eq.\eqref{eq rate of gen error} with the minimax lower rate in Proposition \ref{proposition minimax lower} (displayed in the Section \ref{subsection minimax} and borrowed from \cite{zhang2024optimal}), we know when kernel interpolation is optimal or sub-optimal. Specifically, for any fixed $\gamma > 0$, we define the threshold $\Gamma(\gamma)$ as follows:
\begin{align}\label{eqn:Gamma-gamma}
    \Gamma(\gamma) = \begin{cases} \infty,  & \gamma \in (0, 0.5], \\ 1-\gamma , & \gamma \in (0.5, 1.0], \\  (\gamma-l)/l, & \gamma \in (l, l+0.5] ~\text{for some}~ l \in \mathbb{N}^{+}, \\ (l+1-\gamma)/(l+1), & \gamma \in (l+0.5, l+1] ~\text{for some}~ l \in \mathbb{N}^{+}. \end{cases}
\end{align}
Then detailed elementary computation shows that when $s > \Gamma(\gamma)$, kernel interpolation is sub-optimal; and when $0 < s \le \Gamma(\gamma)$, kernel interpolation is minimax optimal. Intuitively, this indicates that when the true function is smoother (exceeding a certain threshold), kernel interpolation is sub-optimal.  %and explicit regularization is necessary. 
%It may seem strange that 
When $ \gamma \in (0, 0.5]$, kernel interpolation is optimal for any $s>0$ (the threshold $\Gamma(\gamma) = \infty$). 
%We suspect that because the sample size $n$ is too small compared with the dimension $d$, the convergence rate rarely depends on the properties of the true function. 
 When $ \gamma > 0.5$, the threshold $\Gamma(\gamma)$ is no larger than $ 0.5$ and tends to 0 as $\gamma $ tends to $ \infty$. 
 $\Gamma(\gamma)$ is not continuous at $ \gamma = l+0.5, l \in \mathbb{N}$.
 The phase diagram ( Fig.\ref{fig1} ) provides a visualization of the minimax optimal, sub-optimal and inconsistent regions for different values of $\gamma$ and $s$.

\begin{figure}[htbp]
\vspace{-8pt}
\setlength{\abovecaptionskip}{-1pt}   %调整图片标题与图距离
\setlength{\belowcaptionskip}{-2pt}   %调整图片标题与下文距离
% \figuresize{.6}
% \figurebox{20pc}{25pc}{}[fig]
\centering
\includegraphics[width=0.6\columnwidth]{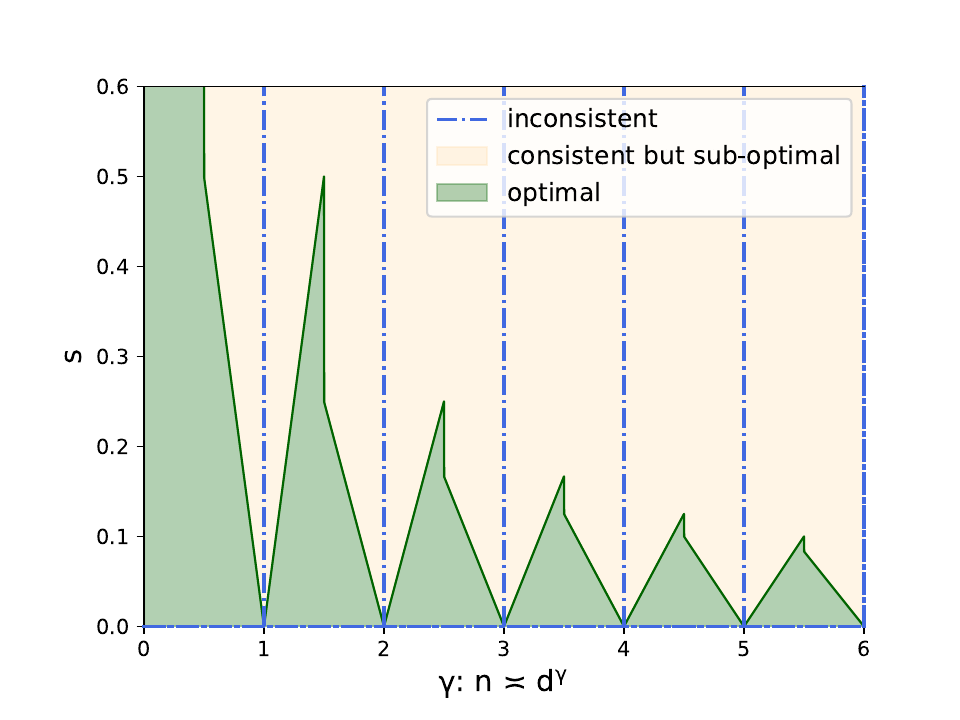}
% note that files may not be rotated
\caption{An illustration of the optimal, sub-optimal and inconsistent regions. When $(s,\gamma)$ falls in the region below the curve (including the boundary), kernel interpolation is optimal; When $(s,\gamma)$ falls in the region above the curve (not including the boundary), kernel interpolation is sub-optimal; When $s=0$ or $\gamma \in \mathbb{N}^{+}$, kernel interpolation is inconsistent.}
\label{fig1}
\vspace{-8pt}
\end{figure}

\section{Discussion}\label{section discussion}

\subsection{Comparison with related works}\label{subsection related work}
There are a lot of excellent recent papers about kernel interpolation in large dimensions. Most of them only provided an upper bound of the generalization ability of the kernel interpolation. Contrast to these existing work, we obtained the exact convergence rates of kernel interpolation with different source condition (which is used to characterize the smoothness of $f^{*}$ ) and determined the regions where the kernel interpolation is optimal, sub-optimal and inconsistent.

\cite{liang2020multiple} studied the inner product kernel and assumed the coordinates of the $d$-dimensional input are independent and identical. They first proved an upper bound of the variance term, which is consistent with Theorem \ref{theorem variance}. Further assuming $ f^{*}(x) = \langle k(x,\cdot), \rho(\cdot)\rangle_{L^{2}}$, with $ \| \rho \|_{L^{4}}^{4} \le C$ for some constant $ C >0$, they demonstrated that the bias term is an infinitesimal of higher order compared with variance term. 

% their assumption on the $f^{*}$ implies the source condition $s \ge 1$ in our scenario. Our results show that the bias term could also be the dominant term when $f^{*}$ is less smooth (with smaller $s$).

Another line of work \cite[etc.]{Ghorbani2019LinearizedTN,Donhauser_how_2021,xiao2022precise,hu2022sharp,misiakiewicz_spectrum_2022} studied the `polynomial approximation barrier', i.e., kernel regression is consistent if and only if $f^{*}$ is a polynomial with a fixed degree $\le \gamma$ (recall notation: $n \asymp d^{\gamma}$). They showed that the generalization error is approximately $ \| \mathrm{P}_{>l} f^{*} \|_{L^{2}} + o_{d,\mathbb{P}}(1)$ as $n,d \to \infty$, where $ \mathrm{P}_{\le l} $ denotes the orthogonal projection onto the subspace of polynomials of degree at most $l$ and $\mathrm{P}_{>l} = \mathrm{I} -\mathrm{P}_{\le l} $. When the source condition in Assumption \ref{assumption source condition} satisfies $s=0$, we have $ \| \mathrm{P}_{>l} f^{*} \|_{L^{2}} = \Theta(1) $ and our rate in Corollary \ref{corollary gen error} recovers the polynomial approximation barrier. For any $s>0$, we have $ \| \mathrm{P}_{>l} f^{*} \|_{L^{2}} = o(1) $ (see Lemma \ref{lemma cal of bias 2} in Section \ref{subsection auxiliary}). Thus, their results are not enough yet if we consider a slightly smaller function space $ [\mathcal{H}]^{s}, s>0$, rather than $ [\mathcal{H}]^{0}$ (by definition, $ [\mathcal{H}]^{0} \cong L^{2}$ for the inner product kernel in Assumption \ref{assumption inner product kernel}).

To our knowledge, \cite{aerni2022strong} is the only paper except us that provides lower bounds of the variance and bias for kernel interpolation in large dimensions. They considered a specific type of convolutional kernel on the discrete hypercube and a special form of the true function $f^{*}(x) = x_{1}x_{2}\cdots x_{L^{*}}$, where $ L^{*}$ is formulated in Theorem 1 in \cite{aerni2022strong}. Under this assumption, $f^{*}$ will fall into the span of first $m$ eigenfunctions, where $m \ll n$ was introduced in Lemma 2 in \cite{aerni2022strong}. By comparison, our assumption on $f^{*}$ does not have this restriction. As will be seen in our proof, we handle both the $< n$ and $>n$ components, which is also why the convergence rate of bias in Theorem \ref{theorem bias} contains two terms. 

A related work \cite{barzilai2023generalization} extended the upper bound of the generalization error of linear regression \cite{bartlett2020benign,tsigler2023benign} to kernel regression under mild assumptions. They also applied their results to kernel interpolation with the inner product kernel on the sphere. It is usually difficult to derive the matching lower bound (not provided in \cite{barzilai2023generalization}) of the convergence rates. For instance, Theorem 2 in \cite{tsigler2023benign} requires several particular assumptions to obtain the matching lower bound in ridge regression. Nevertheless, the lower bound is necessary to discuss the sub-optimality and inconsistency of kernel interpolation and we make it without assuming extra unrealistic assumptions.

\subsection{Minimax lower bound}\label{subsection minimax}
We have compared our convergence rate with the minimax lower rate at the end of Section \ref{subsection gen of ki}. Now, we formulate the minimax lower rate in the following proposition, which is borrowed from Theorem 5 in \cite{zhang2024optimal}. 

\begin{proposition}[Theorem 5 in \cite{zhang2024optimal}]
\label{proposition minimax lower}
Let $c_{1} d^{\gamma} \le n \le c_{2} d^{\gamma} $ for some fixed $ \gamma \in (0,\infty) \backslash \mathbb{N}^{+}$ and absolute constants $ c_{1}, c_{2}$. Consider model Eq.\eqref{main data model} and let $k = k_{d}$ be the inner product kernel on the sphere satisfying Assumption \ref{assumption inner product kernel}. For any $s>0$, we have:

        \begin{itemize}[leftmargin = 15pt]
            \item[(i)] When $\gamma \in \left(p+ps, p+ps+s \right]$ for some $p \in \mathbb{N}$, for any $\delta > 0$, there exist constants $C_1$ and $C$ only depending on $s, \delta, \gamma, \sigma, c_{1}$ and $ c_{2} $ such that for any $d \geq C$, we have:
            \begin{displaymath}
            \min _{\hat{f}} \max _{f^{*} \in [\mathcal{H}]^{s}} E_{X,Y}
            \left\|\hat{f} - f^{*}\right\|_{L^2}^2
            \geq C_1 d^{-\gamma + p - \delta};
            \end{displaymath}

            \item[(ii)] When $\gamma \in \left(p+ps+s, (p+1)+(p+1)s \right]$ for some $p \in \mathbb{N}$, there exist constants $C_1$ and $C$ only depending on $ s, \gamma, \sigma, c_{1}$ and $ c_{2} $ such that for any $d \geq C$, we have:
            \begin{displaymath}
            \min _{\hat{f}} \max _{f^{*} \in [\mathcal{H}]^{s}} E_{X,Y}
            \left\|\hat{f} - f^{*}\right\|_{L^2}^2
            \geq C_1 d^{-(p+1)s},
            \end{displaymath}
        
        \end{itemize}
    where $ E_{X,Y}$ means taking expectation on the train samples $\{ (x_{i}, y_{i}) \}_{i=1}^{n}$.
\end{proposition}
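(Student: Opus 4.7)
The plan is to reduce the minimax estimation problem to a multiple hypothesis testing problem over a carefully chosen finite subfamily of the $[\mathcal{H}]^s$-ball, and then invoke a Fano-type (or Assouad-type) lower bound. The key observation is that for the inner product kernel on $\mathbb{S}^d$, the degree-$k$ spherical harmonic eigenspace has dimension $N(d,k) \asymp d^k$ and eigenvalue $\mu_k \asymp d^{-k}$; testing inside a single such eigenspace yields a clean trade-off between the $L^2$ separation of the hypotheses, their $[\mathcal{H}]^s$-norm, and the KL budget available from $n$ samples, and the two regimes in the proposition will correspond to which constraint is binding at the optimal eigen-degree $k^*$.

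First, I would localize to a fixed degree $k \in \mathbb{N}$ and consider
\begin{displaymath}
    f_\omega(x) \;=\; \varepsilon \sum_{m=1}^{N(d,k)} \omega_m\, \psi_{k,m}(x), \qquad \omega \in \{\pm 1\}^{N(d,k)},
\end{displaymath}
for which $\|f_\omega\|_{L^2}^2 = N(d,k)\varepsilon^2$ and $\|f_\omega\|_{[\mathcal{H}]^s}^2 = \mu_k^{-s} N(d,k) \varepsilon^2$. The Varshamov--Gilbert bound then produces a subfamily $\{\omega^{(j)}\}_{j=1}^M$ with $M \ge 2^{N(d,k)/8}$ and pairwise Hamming distance at least $N(d,k)/8$, so that $\|f_i - f_j\|_{L^2}^2 \gtrsim N(d,k)\varepsilon^2$ for $i \ne j$. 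Membership in the $[\mathcal{H}]^s$-ball of some fixed radius forces $\varepsilon^2 \lesssim d^{-k(1+s)}$.

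Next, I would apply the information-theoretic inequality. Under Gaussian (or sub-Gaussian) noise the averaged Kullback--Leibler divergence between the sample laws at $f_i$ and $f_j$ is bounded by $n\|f_i - f_j\|_{L^2}^2/(2\sigma^2) \lesssim n N(d,k)\varepsilon^2/\sigma^2$, so the Fano condition $\mathrm{KL} \le (1-\delta')\log M \asymp N(d,k)$ becomes $\varepsilon^2 \lesssim \sigma^2/n \asymp d^{-\gamma}$. Taking $\varepsilon^2$ to saturate both constraints yields, for every $k$,
\begin{displaymath}
    \min_{\hat f}\, \max_{f^* \in [\mathcal{H}]^s}\, E_{X,Y}\|\hat f - f^*\|_{L^2}^2 \;\gtrsim\; N(d,k)\,\varepsilon^2 \;\asymp\; \min\bigl( d^{\,k-\gamma},\; d^{-ks}\bigr).
\end{displaymath}

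The last step is elementary optimization over $k$. A short arithmetic check shows that when $\gamma \in (p(1+s),\, p(1+s)+s]$ the variance constraint is binding at $k = p$ and dominates the smoothness-binding alternative $k = p+1$, giving $d^{p-\gamma}$ and matching Case (i); the small $\delta$ in the exponent absorbs the $(1-\delta')$ Fano slack together with any dimension-uniform constants hidden in $\mu_k \asymp d^{-k}$ and $N(d,k) \asymp d^k$. When $\gamma \in (p + (p+1)s,\, (p+1)(1+s)]$ the optimum shifts to $k = p+1$, the smoothness constraint is binding, and the bound becomes $d^{-(p+1)s}$ as in Case (ii); here a coordinate-wise Assouad argument on $\{\pm 1\}^{N(d,p+1)}$ replaces Fano and avoids any logarithmic loss, which is why no $\delta$ appears. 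The principal technical obstacle is bookkeeping: one has to track constants that are uniform in $d$ and in $k \le \lceil \gamma \rceil$, piecing together the Gilbert--Varshamov count, the Fano/Assouad inequality, and the spherical-harmonic asymptotics of Section \ref{section interpolation space and rkhs}, especially near the boundary $\gamma = p(1+s)+s$ where the two regimes meet.
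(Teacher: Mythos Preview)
The paper does not contain a proof of this proposition at all: it is quoted verbatim as ``Theorem 5 in \cite{zhang2024optimal}'' and simply \emph{borrowed} for the purpose of comparing the kernel-interpolation rates with the minimax benchmark (see the sentence introducing the proposition in Section~\ref{subsection minimax}). So there is nothing in the present paper to compare your argument against.

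That said, your sketch is the standard and correct route to such a minimax lower bound, and it is almost certainly what the cited reference does: restrict to a single spherical-harmonic eigenspace of degree $k$, build a Varshamov--Gilbert packing of sign vectors, balance the $[\mathcal{H}]^s$-norm constraint $\mu_k^{-s}N(d,k)\varepsilon^2\lesssim 1$ against the Fano/Assouad information budget $n\varepsilon^2\lesssim 1$, and then optimise the resulting bound $\max_k\min(d^{\,k-\gamma},d^{-ks})$ over the integer $k$. Your identification of $k=p$ in Case~(i) and $k=p+1$ in Case~(ii) is exactly right, and using Assouad in Case~(ii) to avoid a logarithmic loss is the natural refinement. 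One small caveat: your explanation of the extra $\delta$ in Case~(i) as ``Fano slack'' is not quite convincing on its own --- a clean Fano/Assouad argument with $\varepsilon^2=c/n$ normally does not lose a polynomial factor --- so if you were to write this out in full you would want to locate precisely where (if anywhere) a $d^{-\delta}$ loss is genuinely incurred, or else sharpen the statement.
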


% This is the concluding part of the paper.  It is only needed if it contains new material. It should not repeat the summary or reiterate the contents of the paper.

\section{Proof}
\subsection{More notations}\label{subsection more notiations}

In this subsection, we introduce the necessary notations for our proof. Denote the $ n $ samples as $X = \left( x_{1}, \ldots, x_{n} \right)^{\top}$ and $Y = (y_{1}, \ldots, y_{n})^{\top}$. Denote the random noise vector as $ \epsilon = \left( \epsilon_{1}, \ldots, \epsilon_{n} \right)^{\top}$.
% For any integer $ k \in \mathbb{N}$, we use the subscripts $\le k$ and $> k$ to denote the first $1,\ldots,k$ and $k + 1, k + 2, \ldots$ coordinates of a vector respectively.
We use $ \left\| \cdot \right\|_{\mathrm{op}} $ to denote the operator norm of a matrix and use $ \left\| \cdot \right\|_{2} $ to denote the $\ell^{2}$ norm of a vector. Recall the inner product kernel on the sphere (as in Assumption \ref{assumption inner product kernel}) given in the basis of spherical harmonics:
\begin{equation}\label{mercer of inner}
  k(x,x^{\prime}) = \sum_{k=0}^{\infty} \mu_k \sum_{m=1}^{N(d,k)} \psi_{k,m}(x) \psi_{k,m}(x^{\prime}),
\end{equation}
where $ \{\psi_{k,m}\}_{m=1}^{N(d,k)}$ are spherical harmonic polynomials of degree $k$; $ \mu_{k}$ are the eigenvalues with multiplicity $N(d,0)=1$; $N(d, k) = (2k+d-1) \cdot (k+d-2)! / \left[ k (d-1)!(k-1)! \right], k =1,2,\ldots$; and $ \{\psi_{k,m}\}_{k \ge 0, m \ge 1}$ is an orthonormal basis of $L^{2}$ space. In the proof, we treat the function $\Phi$ in Assumption \ref{assumption inner product kernel} as fixed and ignore the constants' dependence on $ \{ a_{j} \}_{j=0}^{\infty}$.

Recall that we consider the large-dimensional setting $n \asymp d^{\gamma}$ for some $\gamma >0$. Throughout the proof, we use the notation $l = \lfloor \gamma \rfloor$ and define
% $B_{l} = B_{l}(d) = \sum_{k=0}^{l} N(d,k).$
\begin{displaymath}
    B_{l} = B_{l}(d) = \sum_{k=0}^{l} N(d,k).
\end{displaymath}
Then Lemma \ref{lemma Ndk} will show that (the following $ \Theta $ only involves constants depending on $ \gamma $):
\begin{equation}\label{eq Bl asymp}
    B_{l} = \Theta(d^{l}).
\end{equation}
Next, we define two important quantities which will be frequently used in the proof:
\begin{equation}\label{eq def of kappa1 2}
    \kappa_{1} = \sum_{k=l+1}^{\infty} \mu_{k} N(d,k), \quad \kappa_{2} = \sum_{k=l+1}^{\infty} \mu_{k}^{2} N(d,k). 
\end{equation}

Denote $\psi(x) = (\psi_{k,m}(x))^{\top}_{k \ge 0, m \ge 1} \in \mathbb{R}^{\infty}$ and $ \Psi_{k,m} = \psi_{k,m}(X) = \left(\psi_{k,m}(x_{1}), \ldots \psi_{k,m}(x_{n}) \right)^{\top} \in \mathbb{R}^{n}$. For any $ k = 0, 1, \ldots$, denote 
\begin{displaymath}
    \Psi_{k} = \left(\Psi_{k,1}, \ldots, \Psi_{k,N(d,k)} \right) \in \mathbb{R}^{n \times N(d,k)},
\end{displaymath}
and $ \Psi = \left( \Psi_{\le l}, \Psi_{> l} \right) \in \mathbb{R}^{n \times \infty}$, where
\begin{displaymath}
    \Psi_{\le l} = \left(\Psi_{1}, \ldots, \Psi_{l} \right) \in \mathbb{R}^{n \times B_{l}}, \quad \Psi_{> l} = \left(\Psi_{l+1}, \Psi_{l+2}, \ldots \right) \in \mathbb{R}^{n \times \infty}.
\end{displaymath}
For any $ k = 0, 1, \ldots$, define the diagonal matrix (~$\mathrm{I}_{N(d,k)} $ is the identity matrix) 
\begin{displaymath}
    \Sigma_{k} = \mu_{k} \mathrm{I}_{N(d,k)} \in \mathbb{R}^{N(d,k) \times N(d,k)},
\end{displaymath}
and $ \Sigma = \text{Diag}  \left\{ \Sigma_{\le l}, \Sigma_{> l} \right\} \in \mathbb{R}^{\infty \times \infty}$, where
\begin{displaymath}
\Sigma_{\le l} = \text{Diag} \left\{\Sigma_{0}, \ldots, \Sigma_{l} \right\} \in \mathbb{R}^{B_{l} \times B_{l}}, \quad \Sigma_{> l} = \text{Diag} \left\{\Sigma_{l+1}, \Sigma_{l+2}, \ldots \right\} \in \mathbb{R}^{\infty \times \infty}.
\end{displaymath}
In Assumption \ref{assumption source condition}, we denote the true function as $ f^{*}(x) = \sum_{k=0}^{\infty} \sum_{m=1}^{N(d,k)} \theta^{*}_{k,m} \psi_{k,m}(x)$. Further denote the coefficient vectors 
\begin{displaymath}
    \theta^{*}_{k} = \left( \theta_{k,1}^{*}, \ldots, \theta_{k,N(d,k)}^{*} \right)^{\top} \in \mathbb{R}^{N(d,k)},
\end{displaymath}
and $ \theta^{*} = \left( \theta^{* \top}_{\le l}, \theta^{* \top}_{> l} \right)^{\top} \in \mathbb{R}^{\infty}$, where
\begin{displaymath}
    \theta^{*}_{\le l} = \left( \theta_{1}^{* \top}, \ldots, \theta_{l}^{* \top} \right)^{\top} \in \mathbb{R}^{B_{l}}, \quad \theta^{*}_{> l} = \left( \theta_{l+1}^{* \top}, \theta_{l+2}^{* \top}, \ldots \right)^{\top} \in \mathbb{R}^{\infty}.
\end{displaymath}
Now we have the following expressions:
\begin{displaymath}
    f^{*}(x) = \psi(x) \theta^{*}, \quad f^{*}(X) = \left( f^{*}(x_{1}), \ldots, f^{*}(x_{n}) \right)^{\top} = \Psi \theta^{*}.
\end{displaymath}
Denote the sample kernel matrix $ K = k(X,X) \in \mathbb{R}^{n \times n}$, where $ K_{ij} = k(x_{i},x_{j})$. Further define $K = K_{\le l} + K_{> l} $, where $ K_{\le l} = \Psi_{\le l} \Sigma_{\le l} \Psi_{\le l}^{\top} $, i.e.,  
\begin{displaymath}
    (K_{\le l})_{i,j} = \sum\limits_{k=0}^{l} \mu_{k} \sum\limits_{m=1}^{N(d,k)} \psi_{k,m}(x_{i}) \psi_{k,m}(x_{j});
\end{displaymath}
and $ K_{> l} = \Psi_{>l} \Sigma_{> l} \Psi_{> l}^{\top} $, i.e.,  
\begin{displaymath}
    (K_{> l})_{i,j} = \sum\limits_{k=l+1}^{\infty} \mu_{k} \sum\limits_{m=1}^{N(d,k)} \psi_{k,m}(x_{i}) \psi_{k,m}(x_{j}). 
\end{displaymath}

\subsection{Eigenvalues and eigenfunctions of the inner product kernel}
\label{subsection eigen of inner property}

In this subsection, we state several useful lemmas about the eigenvalues and eigenfunctions of the inner product kernel, which will be the main workhorse of our proof.

The following Lemma \ref{lemma inner eigen}, \ref{lemma:monotone_of_eigenvalues_of_inner_product_kernels} and \ref{lemma Ndk} (cited from Lemma 3.1, 3.3 and B.1 in \cite{lu2023optimal}) give concise characterizations of $\mu_{k}$ and $N(d,k)$, which is sufficient for the analysis in this paper.

\begin{lemma}\label{lemma inner eigen}
    Let $k = k_{d}$ be the inner product kernel on the sphere satisfying Assumption \ref{assumption inner product kernel}. For any fixed integer $p \ge 0$, there exist constants $C, C_1$ and $C_2$ only depending on $p$ and $\{a_j\}_{j \leq p+1}$, such that for any $d \geq C$, we have
\begin{equation}
\begin{aligned}
{C_1}{d^{-k}} &\leq \mu_{k} \leq {C_2}{d^{-k}}, ~~ k=0,1,\ldots, p+1. \notag
\end{aligned}
\end{equation}
\end{lemma}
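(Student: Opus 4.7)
The approach is to prove the upper and lower bounds by different arguments: the upper bound by a trace identity on $k(x,x)$, and the lower bound by decomposing $\Phi$ into monomial kernels and estimating the top eigenvalue of the single monomial kernel $\langle x,x'\rangle^k$.

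\textbf{Upper bound.} Since $\Phi(1) = \sum_{j \ge 0} a_j \le 1$ and $\langle x, x \rangle = 1$ on $\mathbb{S}^d$, one has $k(x,x) = \Phi(1) \le 1$. Integrating Mercer's expansion \eqref{mercer of inner} against $\mu$ and using orthonormality of $\{\psi_{k,m}\}$ yields
\begin{equation*}
    \Phi(1) \,=\, \int_{\mathcal{X}} k(x,x)\, d\mu(x) \,=\, \sum_{k=0}^{\infty} \mu_k\, N(d,k),
\end{equation*}
so $\mu_k \le 1/N(d,k)$. The explicit formula for $N(d,k)$ recorded in Section~\ref{subsection more notiations} gives $N(d,k) \ge c(p)\, d^k$ for all $k \le p+1$ once $d$ is large, hence $\mu_k \le C_2\, d^{-k}$ with $C_2$ depending only on $p$.

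\textbf{Lower bound.} Expanding $\Phi$ in its power series decomposes the kernel as $k = \sum_{j \ge 0} a_j\, k^{(j)}$ with monomial kernel $k^{(j)}(x,x') = \langle x,x'\rangle^j$. Each $k^{(j)}$ is positive semidefinite (feature map $x \mapsto x^{\otimes j}$), so its degree-$k$ eigenvalue $\mu_k^{(j)}$ is nonnegative, and
\begin{equation*}
    \mu_k \,=\, \sum_{j \ge 0} a_j\, \mu_k^{(j)} \,\ge\, a_k\, \mu_k^{(k)}.
\end{equation*}
Because $k \le p+1$, the coefficient $a_k$ is among $\{a_j\}_{j\le p+1}$, so it suffices to show $\mu_k^{(k)} \gtrsim d^{-k}$. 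By parity, $\mu_{k'}^{(k)} = 0$ unless $k' \le k$ and $k' \equiv k \pmod{2}$, and taking the trace of the integral operator associated to $k^{(k)}$ gives
\begin{equation*}
    \sum_{k' \le k,\ k' \equiv k\,(\mathrm{mod}\,2)} N(d,k')\, \mu_{k'}^{(k)} \,=\, \int_{\mathcal{X}} \langle x,x\rangle^k\, d\mu(x) \,=\, 1.
\end{equation*}
A direct moment computation on $\mathbb{S}^d$ (equivalently, Funk--Hecke applied to $t^k$ against the degree-$k'$ Gegenbauer polynomial with weight $(1-t^2)^{(d-2)/2}$) yields $\mu_{k'}^{(k)} = \Theta(d^{-(k+k')/2})$ for valid $(k', k)$, so the contribution of degree $k'$ to the trace is $\Theta(d^{(k'-k)/2})$, which equals $\Theta(1)$ at $k' = k$ and $o(1)$ for $k' < k$. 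Hence $N(d,k)\,\mu_k^{(k)} \to 1$, so $\mu_k^{(k)} \asymp d^{-k}$, yielding $\mu_k \ge C_1\, d^{-k}$ with $C_1$ depending on $p$ and on $\{a_j\}_{j\le p+1}$.

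\textbf{Main obstacle.} The technical heart of the argument is the estimate $\mu_k^{(k)} \asymp d^{-k}$: one has to verify, uniformly in $d$, that the top harmonic subspace captures a positive fraction of the trace of the monomial kernel. Tracking constants through the Gegenbauer/moment calculation is the delicate step, although it is a standard calculation on the sphere and can also be obtained by induction on $k$ using the trace identity together with the same estimate applied at lower degrees.
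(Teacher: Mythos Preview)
The paper does not prove this lemma itself: it is quoted verbatim from Lemma~3.1 of \cite{lu2023optimal} (see the sentence preceding Lemma~\ref{lemma inner eigen}), so there is no in-paper argument to compare against. Your proposal therefore has to be judged on its own merits.

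Your argument is correct. The upper bound via the trace identity $\sum_{k} \mu_k N(d,k)=\Phi(1)\le 1$ together with $N(d,k)\gtrsim d^{k}$ is clean and immediately gives $\mu_k\le C_2 d^{-k}$ with $C_2$ depending only on $p$ (in fact not even on $\{a_j\}$). For the lower bound, the decomposition $\mu_k=\sum_j a_j\,\mu_k^{(j)}\ge a_k\,\mu_k^{(k)}$ is the right move, and $\mu_k^{(k)}\asymp d^{-k}$ is indeed the crux. Your trace argument for this last estimate works, but a slightly more direct route avoids the intermediate moment bounds altogether: writing $t^{k}=\sum_{k'\le k}\mu_{k'}^{(k)}N(d,k')P_{k'}(t)$ with $P_{k'}(t)=C_{k'}^{\lambda}(t)/C_{k'}^{\lambda}(1)$, $\lambda=(d-1)/2$, and comparing leading $t^{k}$-coefficients gives $\mu_k^{(k)}=1/\bigl(N(d,k)\,\beta_k\bigr)$, where $\beta_k$ is the leading coefficient of $P_k$. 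A short computation with the explicit Gegenbauer formulas shows $\beta_k=\Theta_p(1)$ for fixed $k\le p+1$, hence $\mu_k^{(k)}\asymp d^{-k}$ without needing the lower-degree estimates $\mu_{k'}^{(k)}=\Theta(d^{-(k+k')/2})$. This removes the ``main obstacle'' you flag and makes the induction unnecessary.
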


\begin{lemma}\label{lemma:monotone_of_eigenvalues_of_inner_product_kernels}
    Let $k = k_{d}$ be the inner product kernel on the sphere satisfying Assumption \ref{assumption inner product kernel}. For any fixed integer $p \ge 0$, there exist constants $C$ only depending on $p$ and $\{a_j\}_{j \leq p+1}$, such that for any $d \geq C$, we have
    \begin{equation*}
        \mu_k \leq \frac{C_2}{C_1} d^{-1} \mu_{p}, \quad k=p+1, p+2, \cdots
    \end{equation*}
    where $C_1$ and $C_2$ are constants given in Lemma \ref{lemma inner eigen}.
\end{lemma}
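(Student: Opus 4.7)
The plan is to combine Lemma \ref{lemma inner eigen} (which bounds $\mu_0, \ldots, \mu_{p+1}$) with the trace identity to obtain a uniform control on the remaining tail eigenvalues $\mu_k$ for $k \geq p+2$. The underlying identity is Mercer's theorem applied to the inner product kernel: since $\sum_{k=0}^{\infty} \mu_{k} N(d,k) \;=\; \int_{\mathcal{X}} k(x,x)\, \mathrm{d}\mu(x) \;=\; \Phi(1) \;\le\; 1$ and every summand is non-negative, each term satisfies $\mu_{k} N(d,k) \le 1$, hence $\mu_{k} \le 1/N(d,k)$.

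First I would dispose of the base case $k = p+1$. Applying Lemma \ref{lemma inner eigen} with the same $p$, for $d$ larger than a threshold depending on $p$ and $\{a_j\}_{j \le p+1}$, we have $\mu_p \ge C_1 d^{-p}$ and $\mu_{p+1} \le C_2 d^{-(p+1)}$ simultaneously, so $\mu_{p+1}/\mu_p \le (C_2/C_1)\, d^{-1}$, which is exactly the desired bound. This part is immediate and carries over the very same constants $C_1, C_2$ as required.

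Next I would handle $k \ge p+2$ via the trace bound above. The key intermediate step is to verify that $N(d,k)$ is monotonically non-decreasing in $k$ for $d \ge 2$, $k \ge 1$; this follows from a direct calculation of the ratio $N(d, k+1)/N(d, k) = (k+d-1)(2k+d+1)/[(k+1)(2k+d-1)]$, whose numerator minus denominator equals $(d-1)(2k+d) > 0$. Consequently $\mu_k \le 1/N(d,k) \le 1/N(d,p+2)$. Using the explicit formula for $N(d, p+2)$ and its asymptotic $N(d,p+2) \sim d^{p+2}/(p+2)!$, we obtain $\mu_k \le 2(p+2)!\, d^{-(p+2)}$ for $d$ sufficiently large depending only on $p$. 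For $d$ further above the threshold $2(p+2)!/C_2$ (a constant depending on $p$ and $\{a_j\}_{j \le p+1}$ through $C_2$), this yields $\mu_k \le C_2 d^{-(p+1)}$, and then $\mu_k/\mu_p \le (C_2/C_1) d^{-1}$ by the lower bound on $\mu_p$.

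The mild obstacle is purely bookkeeping of constants: one must enlarge the threshold $C$ beyond the one supplied by Lemma \ref{lemma inner eigen} in order to (i) activate the asymptotics of $N(d, p+2)$ and (ii) absorb the factor $2(p+2)!$ into $C_2$ via the extra $d^{-1}$ factor. Since the new threshold depends only on $p$, the constants $C_1, C_2$ (hence on $p$ and $\{a_j\}_{j \le p+1}$), the combined threshold still satisfies the statement's requirement.
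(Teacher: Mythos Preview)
Your argument is correct. The paper itself does not prove this lemma but merely cites it from \cite{lu2023optimal} (their Lemma 3.3), so there is no in-paper proof to compare against. Your two-part strategy---handling $k=p+1$ directly via Lemma~\ref{lemma inner eigen} and handling $k\ge p+2$ via the trace bound $\mu_k N(d,k)\le \sum_j \mu_j N(d,j)=\Phi(1)\le 1$ combined with the monotonicity $N(d,k)\ge N(d,p+2)\sim d^{p+2}/(p+2)!$---is a clean, self-contained route that delivers exactly the stated inequality with the required dependence of the threshold $C$ on $p$ and $\{a_j\}_{j\le p+1}$ only. The ratio computation $N(d,k+1)/N(d,k)=(k+d-1)(2k+d+1)/[(k+1)(2k+d-1)]$ and the difference $(d-1)(2k+d)>0$ are both correct, and the bookkeeping at the end (enlarging $C$ to absorb $2(p+2)!$ into the extra factor of $d^{-1}$) is routine. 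There is nothing to fix.
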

Lemma \ref{lemma:monotone_of_eigenvalues_of_inner_product_kernels} indicates that when we consider $\mu_{p}$ with any fixed $ p \ge 0$, the subsequent eigenvalues $ \mu_{k}$ ($ k\ge p+1$) are much smaller than $\mu_{p}$. 

\begin{lemma}\label{lemma Ndk}
    For an integer $ k \ge 0$, denote $N(d,k)$ as the multiplicity of the eigenspace corresponding to $\mu_{k}$ in Eq.\eqref{mercer of inner}. For any fixed integer $p \ge 0$, there exist constants $C_3, C_4$ and $C$ only depending on $p$, such that for any $d \ge C$, we have
    \begin{displaymath}
        C_3 d^k \le N(d, k)  \le C_4 d^k, \quad k = 0, 1, \cdots, p+1.
    \end{displaymath}
\end{lemma}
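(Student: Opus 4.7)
The plan is to derive Lemma \ref{lemma Ndk} by a direct computation from the closed-form expression $N(d,k) = (2k+d-1)(k+d-2)!/[k(d-1)!(k-1)!]$ already displayed in the excerpt (together with $N(d,0)=1$). No structural facts about spherical harmonics beyond this identity are needed, and the argument is purely combinatorial; the only care required is ensuring the constants are uniform across the finite range $0\le k\le p+1$.

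First I would rewrite the factorial ratio as a rising product,
\[
    \frac{(k+d-2)!}{(d-1)!} \;=\; \prod_{i=0}^{k-2}(d+i),
\]
so that for every $k\ge 1$,
\[
    N(d,k) \;=\; \frac{2k+d-1}{k!}\prod_{i=0}^{k-2}(d+i).
\]
This exhibits $N(d,k)$ as a polynomial in $d$ of degree exactly $k$ with leading coefficient $1/k!$, which is the right order up to constants depending only on $p$.

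For the lower bound, every factor above is at least $d$ (since $2k-1\ge 1$ and $i\ge 0$), so $N(d,k)\ge d^{k}/k! \ge d^{k}/(p+1)!$ for all $d\ge 1$ and $1\le k\le p+1$; the case $k=0$ is immediate from $N(d,0)=1$. For the upper bound, pick the threshold $C:=2(p+1)$; then for $d\ge C$ and $0\le i\le k-2\le p-1$ we have $d+i\le 2d$ and $2k+d-1\le 2d$, hence
\[
    N(d,k) \;\le\; \frac{(2d)^{k}}{k!} \;\le\; \frac{2^{p+1}}{k!}\,d^{k}.
\]
Taking $C_{3}=1/(p+1)!$ and $C_{4}=\max_{0\le k\le p+1}2^{k}/k!$ then yields the claim.

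There is no substantive obstacle: the lemma reduces to a polynomial identity controlled by two elementary inequalities. The only bookkeeping points are treating $k=0$ separately (since the empty-product convention makes the general formula degenerate there if one is not careful), and verifying that the $d$-threshold $C$ can be chosen independently of $k$. Both are automatic once the range $k\le p+1$ is fixed, because all $k$-dependence can be folded into constants depending only on $p$.
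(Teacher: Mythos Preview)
Your argument is correct. The paper does not supply its own proof of this lemma at all; it simply cites the result from Lemma~B.1 of \cite{lu2023optimal}. Your direct computation from the closed form of $N(d,k)$ is therefore a self-contained alternative, and the bookkeeping (the empty product at $k=1$, the separate treatment of $k=0$, and the uniform choice of $C$, $C_3$, $C_4$ over the finite range $0\le k\le p+1$) is handled correctly.
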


% \subsection{Useful lemmas}
The following lemma determines the exact order of $\kappa_{1}, \kappa_{2}$.
\begin{lemma}\label{lemma rate of k1 k2}
For $\kappa_{1}, \kappa_{2}$ defined in Eq.\eqref{eq def of kappa1 2}, we have 
\begin{align}
    \kappa_{1} = \Theta(1), \quad \kappa_{2} = \Theta\left( d^{-(l+1)} \right), \notag
\end{align}
where $\Theta$ only involves constants depending on $\gamma$.
\end{lemma}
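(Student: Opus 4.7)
The plan is to reduce both estimates to the three preceding lemmas by exploiting the trace identity
\[
\sum_{k=0}^{\infty} \mu_{k} N(d,k) = \int_{\mathcal{X}} k(x,x)\,d\mu(x) = \Phi(1) = \sum_{j=0}^{\infty} a_{j},
\]
which is a fixed constant in $(0,1]$ independent of $d$. Combined with Lemma \ref{lemma inner eigen} and Lemma \ref{lemma Ndk}, each low-degree contribution $\mu_{k} N(d,k)$ for $k = 0, 1, \ldots, l+1$ is already $\Theta(1)$, so the whole estimate boils down to bounding what is left after subtracting the first $l+1$ terms.

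For $\kappa_{1}$, I would write
\[
\kappa_{1} = \Phi(1) - \sum_{k=0}^{l} \mu_{k} N(d,k).
\]
The upper bound $\kappa_{1} \le \Phi(1) = O(1)$ is immediate. For the matching lower bound, I would drop all but the $(l+1)$-th term: $\kappa_{1} \ge \mu_{l+1} N(d,l+1)$. Lemma \ref{lemma inner eigen} applied with $p = l$ gives $\mu_{l+1} \ge C_{1} d^{-(l+1)}$ and Lemma \ref{lemma Ndk} with the same $p$ gives $N(d,l+1) \ge C_{3} d^{l+1}$, so $\kappa_{1} \ge C_{1} C_{3}$ for all sufficiently large $d$. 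Together these give $\kappa_{1} = \Theta(1)$.

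For $\kappa_{2}$, the lower bound follows by the same one-term-retention trick:
\[
\kappa_{2} \ge \mu_{l+1}^{2} N(d,l+1) \ge C_{1}^{2} C_{3}\, d^{-(l+1)}.
\]
For the upper bound I would factor out the largest remaining eigenvalue. By Lemma \ref{lemma:monotone_of_eigenvalues_of_inner_product_kernels} applied at $p = l+1$, for $d$ large enough we have $\mu_{k} \le \mu_{l+1}$ uniformly in $k \ge l+1$. Hence
\[
\kappa_{2} = \sum_{k=l+1}^{\infty} \mu_{k} \cdot \mu_{k} N(d,k) \le \mu_{l+1} \sum_{k=l+1}^{\infty} \mu_{k} N(d,k) = \mu_{l+1} \kappa_{1}.
\]
Plugging in $\mu_{l+1} = O(d^{-(l+1)})$ from Lemma \ref{lemma inner eigen} and $\kappa_{1} = O(1)$ from the first part yields $\kappa_{2} = O(d^{-(l+1)})$, matching the lower bound.

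There is essentially no hard step here; the only thing to be careful about is making sure the constants in Lemma \ref{lemma inner eigen} and Lemma \ref{lemma Ndk} are applied at $p = l$ (so that the range $k = 0, \ldots, l+1$ is covered), and that the monotonicity Lemma \ref{lemma:monotone_of_eigenvalues_of_inner_product_kernels} is invoked at $p = l+1$ so that the bound $\mu_{k} \le \mu_{l+1}$ holds on the tail $k \ge l+1$. All constants depend only on $\gamma$ (through $l = \lfloor \gamma \rfloor$) and on the fixed sequence $\{a_{j}\}$, as required.
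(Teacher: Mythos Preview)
Your proposal is correct and matches the paper's proof essentially line for line: both get $\kappa_{1}\le 1$ from the trace (the paper phrases it as $\|L_{k}\|_{\mathrm{TR}}\le 1$, which is the same identity $\sum_{k}\mu_{k}N(d,k)=\Phi(1)\le 1$), both get the lower bounds by retaining only the $(l{+}1)$-th term, and both bound $\kappa_{2}$ above via $\mu_{k}\lesssim\mu_{l+1}$ for $k\ge l+1$ and the already-established $\kappa_{1}=\Theta(1)$. The only cosmetic remark is that invoking Lemma~\ref{lemma:monotone_of_eigenvalues_of_inner_product_kernels} at $p=l+1$ literally covers $k\ge l+2$; the $k=l+1$ case is the trivial equality, so your conclusion stands.
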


\begin{proof}
    In this proof, all the constants only depend on $\gamma$ (or equivalently, $l$). We first prove $\kappa_{1} = \Theta(1)$. Since we always assume that the integral operator $L_{k}$ in Section \ref{section interpolation space and rkhs} satisfies $ \| L_{k} \|_{\mathrm{TR}} \le 1 $. By the definition of trace norm, we have $ \kappa_{1} \le \sum_{i=1}^{\infty} \lambda_{i} = \| L_{k} \|_{\mathrm{TR}} \le 1$. Lemma \ref{lemma inner eigen}, \ref{lemma Ndk} show that $ \kappa_{1} \ge \mu_{l+1} N(d,l+1) \gtrsim d^{-(l+1)} \cdot d^{l+1} = 1$.

    Next we prove $\kappa_{2} = \Theta\left( d^{-(l+1)} \right) $. On the one hand, Lemma \ref{lemma:monotone_of_eigenvalues_of_inner_product_kernels} implies that
    \begin{align}
        \kappa_{2} &\lesssim \mu_{l+1} \sum\limits_{k=l+1} \mu_{k} N(d,k). \notag
    \end{align}
    Using $\kappa_{1} = \Theta(1)$ and the fact that $ \mu_{l+1} \lesssim d^{-(l+1)}$ by Lemma \ref{lemma inner eigen}, we prove that $ \kappa_{2} \lesssim d^{-(l+1)} $. On the other hand, using Lemma \ref{lemma inner eigen}, \ref{lemma Ndk} again, we have
    \begin{displaymath}
        \kappa_{2} \ge \mu_{l+1}^{2} N(d,l+1) \gtrsim d^{-2(l+1)} \cdot d^{l+1} = d^{-(l+1)}.
    \end{displaymath}
    We finish the proof.
\end{proof}

The following lemma is borrowed from Proposition 3 and Eq.(56) in \cite{Ghorbani2019LinearizedTN}. We state this lemma without proof.
\begin{lemma}\label{lemma psi psi top}
   Let $c_{1} d^{\gamma} \le n \le c_{2} d^{\gamma} $ for some fixed $ \gamma \in (0,\infty)\backslash\mathbb{N}^{+}$ and absolute constants $ c_{1}, c_{2}$. Recall that $l = \lfloor \gamma \rfloor$, then we have
    \begin{displaymath}
        \sup\limits_{p \ge l+1} \left\| \frac{\Psi_{p} \Psi_{p}^{\top} }{N(d,p)} - \mathrm{I}_{n} \right\|_{\mathrm{op}} = o_{\mathbb{P}}(1),
    \end{displaymath}
    which further implies
    \begin{align}\label{eq k ge l in lemma}
        K_{> l} = \Psi_{>l} \Sigma_{>l} \Psi_{>l}^{\top} = \kappa_{1} \left(\mathrm{I}_{n} + \Delta_{1}\right), \quad \|\Delta_{1}\|_{\mathrm{op}} = o_{\mathbb{P}}(1),
    \end{align}
    and
    \begin{equation}\label{eq k2 ge l in lemma}
        \Psi_{>l} \Sigma_{>l}^{2} \Psi_{>l}^{\top} = \kappa_{2} \left(\mathrm{I}_{n} + \Delta_{2}\right), \quad \|\Delta_{2}\|_{\mathrm{op}} = o_{\mathbb{P}}(1),
    \end{equation}
     where $o_{\mathbb{P}}$ only involves constants depending on $\gamma, c_{1}$ and $c_{2}$.
\end{lemma}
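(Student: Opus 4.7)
The plan is to exploit the addition formula for spherical harmonics,
$$\sum_{m=1}^{N(d,p)} \psi_{p,m}(x)\,\psi_{p,m}(x') = N(d,p)\, P_p^{(d)}(\langle x,x'\rangle),$$
where $P_p^{(d)}$ denotes the Gegenbauer polynomial of degree $p$ normalized so that $P_p^{(d)}(1)=1$. This identifies $\frac{\Psi_p\Psi_p^\top}{N(d,p)}$ as the matrix with diagonal entries all equal to $1$ and off-diagonal entries $P_p^{(d)}(\langle x_i, x_j\rangle)$. Since $\mathbb{E}_{x'}[P_p^{(d)}(\langle x, x'\rangle)] = 0$ for $p \ge 1$, these off-diagonal entries are centred, and because $\langle x_i, x_j\rangle$ concentrates at scale $d^{-1/2}$ on the sphere, one expects Wishart-type concentration to $\mathrm{I}_n$.

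For each fixed $p \ge l+1$ I would first establish the pointwise bound $\|\Psi_p\Psi_p^\top/N(d,p) - \mathrm{I}_n\|_{\mathrm{op}} = o_{\mathbb{P}}(1)$ via the moment method, computing $\mathbb{E}\,\mathrm{tr}\bigl((\Psi_p\Psi_p^\top/N(d,p)-\mathrm{I}_n)^{2k}\bigr)$ for a slowly growing $k$ and using the orthogonality identity $\mathbb{E}[P_p^{(d)}(\langle x_i,x_j\rangle)^2] = 1/N(d,p)$. A standard count over closed walks yields a bound of order $n\,(n/N(d,p))^{k}$, and by Lemma \ref{lemma Ndk} together with $\gamma \notin \mathbb{N}^{+}$ we have $n/N(d,p) \lesssim d^{\gamma-l-1} \to 0$ for every $p \ge l+1$. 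To upgrade this to the supremum over $p$, I would split $\{p \ge l+1\}$ into a finite window $\{l+1,\dots,l+M\}$, which is handled by a union bound, and a tail $\{p > l+M\}$, which is handled by summing the moment bound over $p$, using that $n/N(d,p) \lesssim d^{\gamma-p}$ decays geometrically. All computations would be carried out on the high-probability event $\{\max_{i\ne j}|\langle x_i, x_j\rangle| \lesssim \sqrt{(\log d)/d}\}$, on which $P_p^{(d)}$ admits uniform-in-$p$ polynomial bounds.

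With the uniform bound in hand, the matrix identities follow by a clean algebraic step. Writing $E_p := \Psi_p\Psi_p^\top/N(d,p) - \mathrm{I}_n$, we have
$$K_{>l} = \sum_{p \ge l+1}\mu_p N(d,p)\bigl(\mathrm{I}_n + E_p\bigr) = \kappa_1\,\mathrm{I}_n + \sum_{p \ge l+1}\mu_p N(d,p)\,E_p,$$
and the triangle inequality gives $\|K_{>l} - \kappa_1\mathrm{I}_n\|_{\mathrm{op}} \le \bigl(\sup_{p\ge l+1}\|E_p\|_{\mathrm{op}}\bigr)\,\kappa_1 = o_{\mathbb{P}}(1)\,\kappa_1$, which after dividing by $\kappa_1$ is exactly the $\Delta_1$ statement. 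Replacing the weights $\mu_p N(d,p)$ by $\mu_p^2 N(d,p)$ and invoking Lemma \ref{lemma rate of k1 k2} to normalize by $\kappa_2$ yields the $\Delta_2$ statement by an identical argument.

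The main obstacle will be the uniform-in-$p$ concentration: the pointwise moment bound is essentially classical, but the combinatorial constants arising from the walk expansion must be controlled so that they do not blow up as $p$ grows. This is why the truncation of the inner-product event and the $L^\infty$ control of $P_p^{(d)}$ on that event are crucial; an alternative and slicker route is to express everything through orthogonality and thereby avoid pointwise bounds on Gegenbauer polynomials altogether, at the price of a more careful bookkeeping of which walk patterns survive the orthogonality projection.
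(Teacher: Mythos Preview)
The paper does not prove this lemma; it simply cites Proposition~3 and Eq.(56) of \cite{Ghorbani2019LinearizedTN} and states the result without proof. Your sketch---the addition formula to rewrite $\Psi_p\Psi_p^\top/N(d,p)$ as a Gegenbauer-kernel matrix, the moment method with closed-walk combinatorics for each fixed $p$, then uniformization over $p$ by splitting into a finite window plus a summable tail---is essentially the argument carried out in that reference, and your algebraic derivation of the $\Delta_1$, $\Delta_2$ statements from the uniform bound via the triangle inequality is exactly how the implication is obtained. One small remark: your appeal to Lemma~\ref{lemma Ndk} for the geometric tail decay $n/N(d,p)\lesssim d^{\gamma-p}$ is slightly imprecise, since that lemma only covers a fixed finite range of $k$; for the uniform-in-$p$ bound you need either the explicit formula for $N(d,p)$ or its monotonicity in $p$, but this is routine.
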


Throughout the proof, we use $\Delta_{1}, \Delta_{2}$ specifically for the quantities that appeared in Lemma \ref{lemma psi psi top}.

The following lemma is borrowed from Lemma 11 in \cite{Ghorbani2019LinearizedTN}. We state this lemma without proof.
\begin{lemma}\label{lemma psi top psi}
Let $c_{1} d^{\gamma} \le n \le c_{2} d^{\gamma} $ for some fixed $ \gamma \in (0,\infty) \backslash \mathbb{N}^{+}$ and absolute constants $ c_{1}, c_{2}$. Recall that $l = \lfloor \gamma \rfloor$. For any $p \in \mathbb{N}$ and $ p \le l$, denote $B_{p} = \sum_{k=0}^{p} N(d,k)$, then we have
    \begin{displaymath}
         \left\| \frac{\Psi_{\le p}^{\top} \Psi_{\le p} }{n} - \mathrm{I}_{B_{p}} \right\|_{\mathrm{op}} = o_{\mathbb{P}}(1),
    \end{displaymath}
    where $o_{\mathbb{P}}$ only involves constants depending on $\gamma, c_{1}$ and $c_{2}$.
\end{lemma}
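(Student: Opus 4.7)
The plan is to apply a standard matrix concentration inequality (matrix Bernstein) to the sum of i.i.d. outer products $\psi_{\le p}(x_i)\psi_{\le p}(x_i)^{\top}$, exploiting the two special properties of spherical harmonics: orthonormality in $L^{2}(\mathbb{S}^{d},\mu)$ and the addition formula.

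First I would set up notation. Let $\psi_{\le p}(x) = (\psi_{k,m}(x))_{k \le p,\, 1 \le m \le N(d,k)} \in \mathbb{R}^{B_{p}}$, so that $\Psi_{\le p}^{\top} \Psi_{\le p} = \sum_{i=1}^{n} \psi_{\le p}(x_{i})\psi_{\le p}(x_{i})^{\top}$. Orthonormality of $\{\psi_{k,m}\}$ in $L^{2}$ immediately gives $\mathbb{E}[\psi_{\le p}(x)\psi_{\le p}(x)^{\top}] = \mathrm{I}_{B_{p}}$, so writing $Z_{i} = \psi_{\le p}(x_{i})\psi_{\le p}(x_{i})^{\top} - \mathrm{I}_{B_{p}}$ we have i.i.d.\ centered symmetric matrices with $\frac{1}{n}\Psi_{\le p}^{\top}\Psi_{\le p} - \mathrm{I}_{B_{p}} = \frac{1}{n}\sum_{i=1}^{n} Z_{i}$.

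Next I would control the two quantities needed by matrix Bernstein. The classical addition formula for spherical harmonics yields $\sum_{m=1}^{N(d,k)} \psi_{k,m}(x)^{2} = N(d,k)$ for every $x \in \mathbb{S}^{d}$, so $\|\psi_{\le p}(x)\|_{2}^{2} = B_{p}$ deterministically. Hence $\|Z_{i}\|_{\mathrm{op}} \le B_{p} + 1$ almost surely, and moreover
\begin{displaymath}
\mathbb{E}[Z_{i}^{2}] = \mathbb{E}\bigl[\|\psi_{\le p}(x)\|_{2}^{2}\, \psi_{\le p}(x)\psi_{\le p}(x)^{\top}\bigr] - \mathrm{I}_{B_{p}} = (B_{p}-1)\,\mathrm{I}_{B_{p}},
\end{displaymath}
so the matrix variance parameter is $n(B_{p}-1) \le nB_{p}$. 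Applying Tropp's matrix Bernstein inequality yields, for any $t>0$,
\begin{displaymath}
\mathrm{pr}\!\left(\left\|\tfrac{1}{n}\textstyle\sum_{i} Z_{i}\right\|_{\mathrm{op}} \ge t\right) \le 2B_{p}\,\exp\!\left(-\frac{nt^{2}/2}{B_{p} + (B_{p}+1)t/3}\right).
\end{displaymath}

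Finally I would check that the right-hand side vanishes for any fixed $t>0$. By Lemma~\ref{lemma Ndk} we have $B_{p} = \Theta(d^{p})$ and by hypothesis $p \le l = \lfloor \gamma \rfloor < \gamma$ (the strict inequality comes from $\gamma \notin \mathbb{N}^{+}$), so $B_{p}/n \lesssim d^{p-\gamma} \to 0$ with polynomial rate; in fact the tail bound above even permits $t$ to tend to $0$ as long as $t^{2} \gg d^{p-\gamma}\log d$, which is enough for the $o_{\mathbb{P}}(1)$ conclusion. The only place one has to be mildly careful is that $B_{p}$ is allowed to grow with $d$ (up to order $d^{l}$ when $p=l$), so a naive scalar Chernoff bound on entries will not suffice — the main step is really matrix Bernstein together with the deterministic identity $\|\psi_{\le p}(x)\|_{2}^{2} = B_{p}$ that keeps both $\|Z_{i}\|_{\mathrm{op}}$ and $\|\mathbb{E}[Z_{i}^{2}]\|_{\mathrm{op}}$ linear in $B_{p}$ rather than quadratic.
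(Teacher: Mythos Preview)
Your proposal is correct. The paper itself does not give a proof of this lemma; it simply states it and cites Lemma~11 of \cite{Ghorbani2019LinearizedTN}. Your argument via matrix Bernstein, using orthonormality for the mean and the addition formula $\sum_{m}\psi_{k,m}(x)^{2}=N(d,k)$ to get the deterministic bound $\|\psi_{\le p}(x)\|_{2}^{2}=B_{p}$, is exactly the standard way this result is obtained (and is essentially how the cited reference proceeds). The computation $\mathbb{E}[Z_{i}^{2}]=(B_{p}-1)\mathrm{I}_{B_{p}}$ and the final check $B_{p}/n\lesssim d^{p-\gamma}\to 0$ (using $p\le l<\gamma$ from $\gamma\notin\mathbb{N}^{+}$) are both correct, and the polynomial prefactor $2B_{p}$ is harmless against the exponential decay.
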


\subsection{Main proofs of the theorems}

The representer theorem (see, e.g., \cite{steinwart2008support}) gives an explicit expression of the kernel ridge regression estimator \eqref{eq def krr}, i.e., 
\begin{displaymath}
   \hat{f}_{\lambda}(x) = k(x, X)(K+n \lambda \mathrm{I}_{n})^{-1} Y,
\end{displaymath}
where $ K = k(X,X) \in \mathbb{R}^{n \times n}$ is the sample kernel matrix and $ k(x, X)=\left(k\left(x, x_1\right), \cdots, k\left(x, x_n\right)\right)$. Then the kernel interpolation estimator \eqref{eq:KMinNorm} can be expressed as 
\begin{displaymath}
    \hat{f}_{\mathrm{inter}} = k(x,X) K^{-1} Y.
\end{displaymath}

\begin{proof}[of Theorem~\ref{theorem variance}]

In the case of $\gamma \in \mathbb{N}^{+}$, \cite{xiao2022precise,misiakiewicz_spectrum_2022,hu2022sharp} have already proved the precise estimates of $\mathrm{var}(\hat{f}_{\mathrm{inter}})$. Their results directly implies that $ \mathrm{var}(\hat{f}_{\mathrm{inter}}) = \Theta_{\mathbb{P}}(1)$, which is exactly what we stated in Theorem \ref{theorem variance} when $ \gamma \in \mathbb{N}^{+} $ (at this time, we have $ l = \lfloor \gamma \rfloor = \gamma$). Since their proof requires much extra knowledge beyond this paper, we refer to, e.g., Theorem 2 in \cite{xiao2022precise} for details. In the following of this proof, we safely focus on the case of $\gamma \in (0,\infty) \backslash \mathbb{N}^{+}$. All the constants that appear in the proof of this theorem only depend on $s, \gamma$, $c_{1}$ and $ c_{2}$. 

Firstly, the variance term Eq.\eqref{eq var term} can be expressed as
\begin{align}\label{eq var tr}
    \mathrm{var}\left( \hat{f}_{\mathrm{inter}} \right) &= E_{x, \epsilon}\left[\left(k(x,X) K^{-1} \epsilon \right)^2\right] = E_{x,\epsilon} \left[\left( \psi(x) \Sigma^{1/2} \Sigma^{1/2} \Psi^{\top} K^{-1} \epsilon \right)^2\right] \notag \\
    &=  \sigma^{2} \mathrm{tr} \left( K^{-1} \Psi \Sigma^{2} \Psi^{\top} K^{-1} \right).
\end{align}
Using Eq.\eqref{eq k2 ge l in lemma} in Lemma \ref{lemma psi psi top} and the fact that $ \Sigma $ is a diagonal matrix, we can decompose Eq.\eqref{eq var tr} as
\begin{equation}\label{eq var tr decomposition}
    \mathrm{var}\left( \hat{f}_{\mathrm{inter}} \right) / \sigma^{2} = \mathrm{tr} \underbrace{\left(\kappa_{2}  K^{-1} \left(\mathrm{I}_{n} + \Delta_{2}\right) K^{-1}\right)}_{ V_{1}}  ~+~ \mathrm{tr} \underbrace{\left( K^{-1} \Psi_{\le l} \Sigma_{\le l}^{2} \Psi_{\le l}^{\top} K^{-1}\right)}_{ V_{2}},
\end{equation}
where $ \kappa_{2}$ is defined in Eq.\eqref{eq def of kappa1 2} and $ \Delta_{2}$ is introduced in Lemma \ref{lemma psi psi top}.

For $V_{1}$ in Eq.\eqref{eq var tr decomposition}, using Weyl's inequality recalling that $\|\Delta_{2}\|_{\mathrm{op}} = o_{\mathbb{P}}(1)$ , we have
\begin{align}
        \lambda_{\text{max}}\left( V_{1} \right) &= \kappa_{2} \cdot \lambda_{\text{max}}\left( K^{-1} \left( \mathrm{I}_{n} + \Delta_{2} \right)  K^{-1}\right) \notag \\
        &\overset{\text{Weyl's}}{\leq} \kappa_{2} \cdot \left[ \lambda_{\text{max}}\left( K^{-1} K^{-1}\right) + \lambda_{\text{max}}\left( K^{-1} \Delta_{2} K^{-1}\right)  \right] \notag \\
        &\le \kappa_{2} \cdot \lambda_{\text{min}}\left( K\right)^{-2} \left( 1 + o_{\mathbb{P}}(1) \right). \notag
    \end{align}
    Recalling that $\kappa_{1} = \Theta(1)$ in Lemma \ref{lemma rate of k1 k2} and $\|\Delta_{1}\|_{\mathrm{op}} = o_{\mathbb{P}}(1)$ in Lemma \ref{lemma psi psi top}, we have
    \begin{align}
        \lambda_{\text{min}}\left( K\right) &\overset{\text{Weyl's}}{\geq} \lambda_{\text{min}} \left( \Psi_{\le l} \Sigma_{\le l} \Psi_{\le l}^{\top} \right) + \lambda_{\text{min}} \left( \kappa_{1} \mathrm{I}_{n} \right) + \lambda_{\text{min}} \left( \kappa_{1} \Delta_{1} \right) \notag \\
        &\ge \lambda_{\text{min}} \left( \kappa_{1} \mathrm{I}_{n} \right) + \lambda_{\text{min}} \left( \kappa_{1} \Delta_{1} \right) \notag \\
        &= \Theta_{\mathbb{P}}(1). \notag
    \end{align}
    We have proved in Lemma \ref{lemma rate of k1 k2} that $ \kappa_{2} = \Theta(d^{-(l+1)})$. Therefore, we have $\lambda_{\text{max}}\left( V_{1} \right) = O_{\mathbb{P}} \left(  d^{-(l+1)} \right)$, which further implies that
    % \begin{equation}\label{eq lambda max V1}
    %     \lambda_{\text{max}}\left( V_{1} \right) = O_{\mathbb{P}} \left(  d^{-(l+1)} \right).
    % \end{equation}
    % Eq.\eqref{eq lambda max V1} further implies that
    \begin{equation}\label{eq tr v1 up}
       \mathrm{tr} \left( V_{1} \right)  = O_{\mathbb{P}} \left( \frac{n}{d^{(l+1)} } \right).
    \end{equation}
    On the other hand, we have
    \begin{align}
        \mathrm{tr}\left( V_{1} \right) &=  \kappa_{2} \cdot \mathrm{tr}\left( K^{-1} \left( \mathrm{I}_{n} + \Delta_{2} \right)  K^{-1}\right) \ge  \kappa_{2} \cdot \mathrm{tr}\left( K^{-1} K^{-1}\right) \left( 1 - o_{\mathbb{P}}(1) \right) \notag \\
        &\ge  \kappa_{2} \cdot n \left[ \mathrm{tr}\left( \frac{K}{n} \right) \right]^{-2} \left( 1 -o_{\mathbb{P}}(1) \right), \notag
    \end{align}
    where we use Jensen inequality and the fact that the eigenvalues of $K$ are positive for the last inequality. Further, we have
    \begin{align}
        \mathrm{tr}\left( \frac{K}{n} \right) &= \mathrm{tr}\left( \frac{1}{n} \Psi_{\le l} \Sigma_{\le l} \Psi_{\le l}^{\top} \right) + \mathrm{tr}\left( \frac{\kappa_{1}}{n} \mathrm{I}_{n}  \right) + \mathrm{tr}\left( \frac{\kappa_{1}}{n} \mathbf{\Delta}_{1}  \right) \notag \\
        &= \mathrm{tr}\left( \frac{1}{n} \Psi_{\le l}^{\top} \Psi_{\le l} \Sigma_{\le l} \right) + \kappa_{1} \left( 1 + o_{\mathbb{P}}(1)\right) \notag \\
        &= \mathrm{tr}\left( \mathrm{I}_{B_{l}} \Sigma_{\le l}) (1 + o_{\mathbb{P}}(1)\right) + \kappa_{1} \left( 1 + o_{\mathbb{P}}(1)\right) \notag \\
        &= O_{\mathbb{P}}(1), \notag
    \end{align}
    where we use Lemma \ref{lemma psi top psi} for the third equality. Therefore, we can prove that 
    \begin{equation}\label{eq tr v1 low}
       \mathrm{tr}\left( V_{1} \right) = \Omega_{\mathbb{P}}\left( n \kappa_{2} \right) = \Omega_{\mathbb{P}}\left( \frac{n}{d^{l+1}}\right).
    \end{equation}
Combining Eq.\eqref{eq tr v1 up} and Eq.\eqref{eq tr v1 low}, we have
    \begin{equation}\label{eq tr v1 exact}
       \mathrm{tr}\left( V_{1} \right) = \Theta_{\mathbb{P}}\left( \frac{n}{d^{l+1}}\right).
    \end{equation}

For $V_{2}$ in Eq.\eqref{eq var tr decomposition}, using Lemma \ref{lemma trans in V2} for the second equality below, we have
\begin{align}
    \mathrm{tr} \left( V_{2} \right) &= \mathrm{tr} \left( \Sigma_{\le l} \Psi_{\le l}^{\top}  K^{-2} \Psi_{\le l} \Sigma_{\le l} \right) \notag \\
    &= \frac{1}{n} \mathrm{tr} \left[ \left( \frac{\Sigma_{\le l}^{-1}}{n} + \frac{\Psi_{\le l}^{\top} K_{>l}^{-1} \Psi_{\le l}}{n}   \right)^{-1} \frac{\Psi_{\le l}^{\top} K_{>l}^{-2} \Psi_{\le l}}{n} \left( \frac{\Sigma_{\le l}^{-1}}{n} + \frac{\Psi_{\le l}^{\top} K_{>l}^{-1} \Psi_{\le l}}{n}   \right)^{-1}  \right]. \notag
\end{align}
Using Lemma \ref{lemma lambda max min}, we have
\begin{align}\label{eq tr v2 exact}
    \mathrm{tr} \left( V_{2} \right) = \Theta_{\mathbb{P}} \left[ \frac{1}{n} \mathrm{tr} \left(  \frac{\Psi_{\le l}^{\top} K_{>l}^{-2} \Psi_{\le l}}{n} \right)\right] = \Theta_{\mathbb{P}}\left[  \frac{1}{n} \mathrm{tr} \left(  \frac{\Psi_{\le l}^{\top} \Psi_{\le l}}{n} \right)\right]  = \Theta_{\mathbb{P}}\left( \frac{d^{l}}{n}\right),
\end{align}
where we use Eq.\eqref{eq k ge l in lemma} in Lemma \ref{lemma psi psi top} and the fact that $ \kappa_{1} = \Theta(1)$ for the second equality; and we use Lemma \ref{lemma psi top psi} for the last equality. 

Finally, plugging Eq.\eqref{eq tr v1 exact}, \eqref{eq tr v2 exact} into Eq.\eqref{eq var tr decomposition}, we finish the proof of Theorem \ref{theorem variance}. 

$\hfill\square$

\end{proof}

\begin{proof}[of Theorem~\ref{theorem bias}]

All the constants that appear in the proof of this theorem only depend on $s, \gamma, R_{\gamma}$, $c_{0}, c_{1}$ and $ c_{2}$. Firstly, the bias term Eq.\eqref{eq bias term} can be expressed as
\begin{align}
    \mathrm{bias}^2(\hat{f}_{\mathrm{inter}}) &= E_{x} \left[\left( k(x,X) K^{-1} f^{*}(X) -f^{*}(x)\right)^2\right] \notag \\
    &= E_{x} \left[\left( \psi(x) \Sigma^{1/2} \Sigma^{1/2} \Psi^{\top} K^{-1} \Psi \theta^{*} - \psi(x) \theta^{*}\right)^2\right] \notag \\
    &= \| \Sigma \Psi^{\top} K^{-1} \Psi \theta^{*} - \theta^{*}  \|_{2}^{2}. \notag
\end{align}
Deriving the lower bounds of the bias term is the main obstacle in related work. We make it by making the following decomposition Eq.\eqref{eq decom of bias}, calculating the tight bounds of the two terms in Eq.\eqref{eq decom of bias}, and discussing two cases: $ 0 \le s \le 2\gamma - 2l$ and $ s > 2 \gamma - 2l$.  Since $ \Sigma $ is a diagonal matrix, we decompose the bias term into the $\le l$ components and the $> l$ components:
\begin{equation}\label{eq decom of bias}
    \mathrm{bias}^2(\hat{f}_{\mathrm{inter}}) = \underbrace{\| \Sigma_{\le l} \Psi^{\top}_{\le l} K^{-1} \Psi \theta^{*} - \theta^{*}_{\le l}  \|_{2}^{2}}_{B_{1}} + \underbrace{\| \Sigma_{> l} \Psi^{\top}_{> l} K^{-1} \Psi \theta^{*} - \theta^{*}_{> l}  \|_{2}^{2}}_{B_{2}}.
\end{equation} 
The key point of our proof is: 1) when $ 0 \le s \le 2\gamma - 2l$, the dominant term is $ \| \theta^{*}_{>l}  \|_{2}^{2} = \Theta\left( d^{-(l+1)s} \right) $ from $B_{2}$ ;~~ 2) when $ s > 2 \gamma - 2l$, the dominant term is $ \| \Sigma_{\le l}^{-1} \theta^{*}_{\le l}  \|_{2}^{2} / n^{2} =  \Theta\left( d^{(2-\tilde{s})l - 2\gamma} \right) $ from $ B_{1}$ .   \\

Case 1: $ 0 \le s \le 2\gamma - 2l$. In this case, we have $ d^{(2-\tilde{s})l-2 \gamma} \lesssim d^{-(l+1)s} $.

For $B_{1}$ in Eq.\eqref{eq decom of bias}, using the fact that $\Psi \theta^{*} = \Psi_{\le l} \theta^{*}_{\le l} + \Psi_{> l} \theta^{*}_{> l} $, we have 
\begin{align}\label{eq decomposition B1}
    B_{1} &= \left\| \Sigma_{\le l} \Psi^{\top}_{\le l} K^{-1} \Psi_{\le l} \theta^{*}_{\le l} + \Sigma_{\le l} \Psi^{\top}_{\le l} K^{-1} \Psi_{> l} \theta^{*}_{> l} - \theta^{*}_{\le l}  \right\|_{2}^{2} \notag \\
    &\le 2 \underbrace{\left\| \Sigma_{\le l} \Psi^{\top}_{\le l} K^{-1} \Psi_{\le l} \theta^{*}_{\le l} - \theta^{*}_{\le l}  \right\|_{2}^{2}}_{B_{1,1}} + 2 \underbrace{\left\|  \Sigma_{\le l} \Psi^{\top}_{\le l} K^{-1} \Psi_{> l} \theta^{*}_{> l} \right\|_{2}^{2}}_{B_{1,2}} .
\end{align}
Since we have $d^{(2-\tilde{s})l-2 \gamma} \lesssim d^{-(l+1)s}$, then the bounds of $B_{1,1}, B_{1,2}$ in Lemma \ref{lemma B11}, \ref{lemma B12} imply that: 
% \begin{equation}
%     B_{1,1} = O_{\mathbb{P}}\left( d^{-(l+1)s} \right); ~~ B_{1,2} = \Theta_{\mathbb{P}}\left(d^{(2-\tilde{s})l-2 \gamma} \right) = O_{\mathbb{P}}\left( d^{-(l+1)s} \right).
% \end{equation}
% Therefore, we have
\begin{equation}\label{eq ref B1-1}
    B_{1} =  O_{\mathbb{P}}\left( d^{-(l+1)s} \right).
\end{equation}
For $ B_{2}$ in Eq.\eqref{eq decom of bias}, on the one hand, using $\Psi \theta^{*} = \Psi_{\le l} \theta^{*}_{\le l} + \Psi_{> l} \theta^{*}_{> l} $, we have 
    \begin{align}\label{eq decomposition B2}
        B_{2} &= \left\| \Sigma_{> l} \Psi^{\top}_{> l} K^{-1} \Psi_{\le l} \theta^{*}_{\le l} + \Sigma_{> l} \Psi^{\top}_{> l} K^{-1} \Psi_{> l} \theta^{*}_{> l} - \theta^{*}_{> l}  \right\|_{2}^{2} \notag \\
        &\le 3 \underbrace{ \|\theta^{*}_{>l} \|_{2}^{2}}_{B_{2,1}} +  3 \underbrace{ \left\| \Sigma_{> l} \Psi^{\top}_{> l} K^{-1} \Psi_{\le l} \theta^{*}_{\le l}  \right\|_{2}^{2}}_{B_{2,2}} + 3 \underbrace{ \left\|  \Sigma_{>l} \Psi^{\top}_{> l} K^{-1} \Psi_{> l} \theta^{*}_{> l} \right\|_{2}^{2}}_{ B_{2,3}}.
    \end{align}
    We have already show in Lemma \ref{lemma cal of bias 2} that $ B_{2,1} = \Theta_{\mathbb{P}} \left( d^{-(l+1)s} \right)$. Using $d^{(2-\tilde{s})l-2 \gamma} \lesssim d^{-(l+1)s}$ again, Lemma \ref{lemma B22}, \ref{lemma B23} show that $ B_{2,2} = B_{2,3} = o_{\mathbb{P}}\left( d^{-(l+1)s} \right) $ , which further implies that: 
% \begin{equation}\label{eq ref B22 B23}
%     B_{2,2} = o_{\mathbb{P}}\left( d^{(2-\tilde{s})l-2 \gamma} \right) = o_{\mathbb{P}}\left( d^{-(l+1)s}\right); ~~ B_{2,3} = o_{\mathbb{P}}\left( d^{-(l+1)s} \right).
% \end{equation}
% Therefore, we have
\begin{equation}\label{eq ref B2-1}
  \quad B_{2} = O_{\mathbb{P}}\left( d^{-(l+1)s} \right).
\end{equation}
On the other hand, we have
\begin{align}
    (B_{2})^{1/2} \ge \|\theta^{*}_{>l} \|_{2} -  \left\| \Sigma_{> l} \Psi^{\top}_{> l} K^{-1} \Psi_{\le l} \theta^{*}_{\le l}  \right\|_{2}-\left\|  \Sigma_{>l} \Psi^{\top}_{> l} K^{-1} \Psi_{> l} \theta^{*}_{> l} \right\|_{2}. \notag
\end{align}
Using Lemma \ref{lemma cal of bias 2}, \ref{lemma B22}, \ref{lemma B23} as above, we have
\begin{equation}\label{eq ref B2-2}
  \quad B_{2} = \Omega_{\mathbb{P}}\left( d^{-(l+1)s} \right).
\end{equation}
Plugging Eq.\eqref{eq ref B1-1}, \eqref{eq ref B2-1} and \eqref{eq ref B2-2} into Eq.\eqref{eq decom of bias}, we finally prove that when $ 0 \le s \le 2\gamma - 2l$, we have
\begin{displaymath}
    \mathrm{bias}^2(\hat{f}_{\mathrm{inter}}) = \Theta_{\mathbb{P}}\left( d^{-(l+1)s} \right).
\end{displaymath}

Case 2: $ s > 2\gamma - 2l$. In this case, we have $  d^{-(l+1)s} \ll d^{(2-\tilde{s})l-2 \gamma} $.

For $B_{1}$ in Eq.\eqref{eq decom of bias}, on the one hand, using Lemma \ref{lemma B11}, \ref{lemma B12} and the decomposition Eq.\eqref{eq decomposition B1}, we have
\begin{equation}\label{eq ref2 B1-1}
    B_{1} = O_{\mathbb{P}}\left( d^{(2-\tilde{s})l-2\gamma} \right).
\end{equation}
On the other hand, we have
\begin{align}\label{eq lower dec B1}
    (B_{1})^{1/2} \ge \left\| \Sigma_{\le l} \Psi^{\top}_{\le l} K^{-1} \Psi_{\le l} \theta^{*}_{\le l} - \theta^{*}_{\le l}  \right\|_{2} - \left\|  \Sigma_{\le l} \Psi^{\top}_{\le l} K^{-1} \Psi_{> l} \theta^{*}_{> l} \right\|_{2}.
\end{align}
Since $ d^{-(l+1)s} \ll d^{(2-\tilde{s})l-2 \gamma} $, Lemma \ref{lemma B11}, \ref{lemma B12} show that $ B_{1,1}=\Theta_{\mathbb{P}} \left( d^{(2-\tilde{s})l-2 \gamma} \right), B_{1,2}= o_{\mathbb{P}} \left( d^{(2-\tilde{s})l-2 \gamma} \right)$, which further imply
\begin{equation}\label{eq ref2 B1-2}
    B_{1} = \Omega_{\mathbb{P}}\left( d^{(2-\tilde{s})l-2\gamma} \right).
\end{equation}
For $B_{2}$ in Eq.\eqref{eq decom of bias}, using Lemma \ref{lemma cal of bias 2}, \ref{lemma B22}, \ref{lemma B23}, and the decomposition Eq.\eqref{eq decomposition B2}, we have
\begin{equation}\label{eq ref2 B2-1}
    \quad B_{2} = O_{\mathbb{P}}\left( d^{(2-\tilde{s})l-2\gamma} \right).
\end{equation}
Plugging Eq.\eqref{eq ref2 B1-1}, \eqref{eq ref2 B1-2} and \eqref{eq ref2 B2-1} into Eq.\eqref{eq decom of bias}, we finally prove that when $ s > 2\gamma - 2l$, we have
\begin{displaymath}
    \mathrm{bias}^2(\hat{f}_{\mathrm{inter}}) = \Theta_{\mathbb{P}}\left( d^{(2-\tilde{s})l-2\gamma} \right).
\end{displaymath}

To sum up, for any $ s \ge 0$, we prove that  
\begin{displaymath}
    \mathrm{bias}^2(\hat{f}_{\mathrm{inter}}) = \Theta_{\mathbb{P}}\left( d^{-(l+1)s} + d^{(2-\tilde{s})l-2\gamma} \right).
\end{displaymath}
We finish the proof of Theorem \ref{theorem bias}.

$\hfill\square$

\end{proof}

\subsection{Auxiliary results}\label{subsection auxiliary}

In this subsection, we provide some intermediate results that were used in the previous proof.

The following lemma was frequently used in the proof.
\begin{lemma}\label{lemma lambda max min}
Let $c_{1} d^{\gamma} \le n \le c_{2} d^{\gamma} $ for some fixed $ \gamma \in (0,\infty) \backslash \mathbb{N}^{+}$ and absolute constants $ c_{1}, c_{2}$. Recall that $l = \lfloor \gamma \rfloor$, then we have
    \begin{displaymath}
        \lambda_{\text{max}} \left[  \left( \frac{\Sigma_{\le l}^{-1}}{n} + \frac{\Psi_{\le l}^{\top} K_{>l}^{-1} \Psi_{\le l}}{n}   \right)^{-1} \right]  \asymp \lambda_{\text{min}} \left[  \left( \frac{\Sigma_{\le l}^{-1}}{n} + \frac{\Psi_{\le l}^{\top} K_{>l}^{-1} \Psi_{\le l}}{n}  \right)^{-1} \right] = \Theta_{\mathbb{P}}(1),
    \end{displaymath}
    where $\Theta_{\mathbb{P}}$ only involves constants depending on $\gamma, c_{1}$ and $c_{2}$.
\end{lemma}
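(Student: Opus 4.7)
The plan is to prove the equivalent statement that $A := \Sigma_{\le l}^{-1}/n + \Psi_{\le l}^\top K_{>l}^{-1} \Psi_{\le l}/n$ itself satisfies $\lambda_{\min}(A) = \Omega_{\mathbb{P}}(1)$ and $\lambda_{\max}(A) = O_{\mathbb{P}}(1)$; taking reciprocals then immediately yields the asserted bounds for $A^{-1}$. I would decompose $A = A_{\mathrm{p}} + A_{\mathrm{s}}$ into a ``population'' piece $A_{\mathrm{p}} := \Sigma_{\le l}^{-1}/n$ and a ``sample'' piece $A_{\mathrm{s}} := \Psi_{\le l}^\top K_{>l}^{-1} \Psi_{\le l}/n$, both positive semidefinite, and argue that $A_{\mathrm{s}}$ is spectrally pinched between two positive constants with high probability while $A_{\mathrm{p}}$ is a vanishing perturbation. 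Weyl's inequality then closes the argument.

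For the sample piece, I would first reduce the ``middle kernel'' to (a scalar multiple of) the identity: by Lemma \ref{lemma psi psi top}, $K_{>l} = \kappa_1(I_n + \Delta_1)$ with $\|\Delta_1\|_{\mathrm{op}} = o_{\mathbb{P}}(1)$, and Lemma \ref{lemma rate of k1 k2} gives $\kappa_1 = \Theta(1)$, so $K_{>l}^{-1}$ is positive definite with both $\lambda_{\min}(K_{>l}^{-1})$ and $\lambda_{\max}(K_{>l}^{-1})$ of order $\Theta_{\mathbb{P}}(1)$. Sandwiching then yields
\[
\lambda_{\min}(K_{>l}^{-1}) \cdot \frac{\Psi_{\le l}^\top \Psi_{\le l}}{n} \;\preceq\; A_{\mathrm{s}} \;\preceq\; \lambda_{\max}(K_{>l}^{-1}) \cdot \frac{\Psi_{\le l}^\top \Psi_{\le l}}{n},
\]
and Lemma \ref{lemma psi top psi} provides $\Psi_{\le l}^\top \Psi_{\le l}/n = I_{B_l} + o_{\mathbb{P}}(1)$, whose extremal eigenvalues are both $\Theta_{\mathbb{P}}(1)$. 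Chaining these bounds gives $\lambda_{\min}(A_{\mathrm{s}}) = \Omega_{\mathbb{P}}(1)$ and $\lambda_{\max}(A_{\mathrm{s}}) = O_{\mathbb{P}}(1)$.

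For the population piece, $\Sigma_{\le l}$ is block-diagonal with eigenvalues $\mu_0, \mu_1, \ldots, \mu_l$, so the largest eigenvalue of $\Sigma_{\le l}^{-1}$ is $\mu_l^{-1}$; Lemma \ref{lemma inner eigen} gives $\mu_l = \Theta(d^{-l})$, whence
\[
\|A_{\mathrm{p}}\|_{\mathrm{op}} = \Theta(d^l / n) = \Theta(d^{\,l - \gamma}) = o(1),
\]
since $\gamma > l$ (which holds because $\gamma \notin \mathbb{N}^+$ forces $l = \lfloor \gamma \rfloor < \gamma$).

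Finally, Weyl's inequality combined with the positivity $A_{\mathrm{p}}, A_{\mathrm{s}} \succeq 0$ gives $\lambda_{\min}(A) \ge \lambda_{\min}(A_{\mathrm{s}}) = \Omega_{\mathbb{P}}(1)$ and $\lambda_{\max}(A) \le \|A_{\mathrm{p}}\|_{\mathrm{op}} + \lambda_{\max}(A_{\mathrm{s}}) = O_{\mathbb{P}}(1)$, which completes the argument. I do not expect a genuine obstacle here: the delicate random-matrix content has already been absorbed into Lemmas \ref{lemma psi psi top} and \ref{lemma psi top psi}, and the present lemma is essentially spectral bookkeeping that combines them with the deterministic eigenvalue bound on $\Sigma_{\le l}^{-1}$. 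The only step deserving a brief note is the sandwich inequality, which uses the positive definiteness of $K_{>l}^{-1}$.
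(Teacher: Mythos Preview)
Your proposal is correct and follows essentially the same approach as the paper: the paper's proof is a one-sentence sketch that invokes Lemma \ref{lemma inner eigen} for $\mu_l^{-1}\asymp d^l\ll n$, Lemma \ref{lemma rate of k1 k2} for $\kappa_1=\Theta(1)$, and then Lemmas \ref{lemma psi psi top} and \ref{lemma psi top psi}, which is exactly what you unpack (with the added Weyl/sandwich bookkeeping made explicit).
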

\begin{proof}
    Lemma \ref{lemma inner eigen} implies that $ (\mu_{l})^{-1} \asymp d^{l} \ll n$ Lemma \ref{lemma rate of k1 k2} implies $ \kappa_{1} = \Theta(1)$. Therefore, using Eq.\eqref{eq k ge l in lemma} in Lemma \ref{lemma psi psi top} and Lemma \ref{lemma psi top psi}, we finish the proof.
\end{proof}

The following lemma is an application of the Markov's inequality, which was also used in \cite[Lemma 3]{barzilai2023generalization}.
\begin{lemma}\label{lemma ge l l2}
Recall that $l = \lfloor \gamma \rfloor$. With probability at least $ 1- \delta$, we have
    \begin{displaymath}
        \left\| \Psi_{> l} \theta_{>l}^{*} \right\|_{2}^{2} \le \frac{1}{\delta} n \| \theta_{>l}^{*} \|_{2}^{2}.
    \end{displaymath}
\end{lemma}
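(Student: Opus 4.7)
The plan is to prove this by computing the expectation of $\|\Psi_{>l}\theta^*_{>l}\|_2^2$ exactly and then invoking Markov's inequality, exactly as the lemma statement hints.

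First I would rewrite the quantity in a form that exposes its structure. Observe that the $i$-th coordinate of $\Psi_{>l}\theta^*_{>l} \in \mathbb{R}^n$ is exactly
\begin{equation*}
(\Psi_{>l}\theta^*_{>l})_i \;=\; \sum_{k=l+1}^{\infty}\sum_{m=1}^{N(d,k)} \theta^*_{k,m}\,\psi_{k,m}(x_i) \;=\; (\mathrm{P}_{>l} f^*)(x_i),
\end{equation*}
where $\mathrm{P}_{>l}$ is the orthogonal projection onto the span of spherical harmonics of degree $>l$. Hence
\begin{equation*}
\bigl\| \Psi_{>l}\theta^*_{>l}\bigr\|_2^2 \;=\; \sum_{i=1}^{n}\bigl[(\mathrm{P}_{>l}f^*)(x_i)\bigr]^2.
\end{equation*}

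Next, since the $x_i$ are i.i.d. from $\mu$ and $\{\psi_{k,m}\}$ is an orthonormal basis of $L^2(\mathcal{X},\mu)$, taking expectation over $X$ and using orthonormality gives
\begin{equation*}
E_X\bigl\| \Psi_{>l}\theta^*_{>l}\bigr\|_2^2 \;=\; n\, E_{x\sim\mu}\bigl[(\mathrm{P}_{>l}f^*)(x)\bigr]^2 \;=\; n\,\|\mathrm{P}_{>l}f^*\|_{L^2}^2 \;=\; n\,\|\theta^*_{>l}\|_2^2.
\end{equation*}

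Finally, Markov's inequality applied to the nonnegative random variable $\|\Psi_{>l}\theta^*_{>l}\|_2^2$ with threshold $t = n\|\theta^*_{>l}\|_2^2/\delta$ yields
\begin{equation*}
\mathrm{pr}\!\left(\bigl\| \Psi_{>l}\theta^*_{>l}\bigr\|_2^2 > \tfrac{1}{\delta}\, n \,\|\theta^*_{>l}\|_2^2\right) \;\le\; \frac{E_X\bigl\| \Psi_{>l}\theta^*_{>l}\bigr\|_2^2}{n\|\theta^*_{>l}\|_2^2/\delta} \;=\; \delta,
\end{equation*}
which is the claim. There is no real obstacle; the only subtlety worth double-checking is that exchanging expectation with the infinite sum defining $(\mathrm{P}_{>l}f^*)(x_i)$ is legitimate, which follows from $f^*\in L^2$ (equivalently, $\theta^*\in \ell^2$) so that $\sum_{k>l}\sum_m (\theta^*_{k,m})^2 <\infty$ and Fubini/Parseval apply term-by-term.
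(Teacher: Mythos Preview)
Your proposal is correct and follows essentially the same approach as the paper: both compute $E_X\|\Psi_{>l}\theta^*_{>l}\|_2^2 = n\|\theta^*_{>l}\|_2^2$ using the orthonormality of $\{\psi_{k,m}\}$ and then apply Markov's inequality. The only cosmetic difference is that you phrase the computation via the projection $\mathrm{P}_{>l}f^*$, whereas the paper writes out the sum explicitly and names the summands $\beta_i$.
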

\begin{proof}
    Define $\beta_{i} = \left[\left(\Psi_{>l} \theta_{>l}^{*}\right)_{i}\right]^{2} = \left( \sum_{k=l+1}^{\infty} \sum_{m=1}^{N(d,k)} \psi_{k,m}(x_{i}) \theta_{k,m}^{*}\right)^{2} $. Then we have $\left\| \Psi_{> l} \theta_{>l}^{*} \right\|_{2}^{2} = \sum\limits_{i=1}^{n} \beta_{i}$. Since $ \{ x_{i} \}_{i=1}^{n}$ are i.i.d., we have $ \{ \beta_{i} \}_{i=1}^{n}$ are also i.i.d., with mean
    \begin{align}
        E_{x_{i}} \left( \beta_{i} \right) &= \sum_{k=l+1}^{\infty} \sum_{m=1}^{N(d,k)} \left( \theta_{k,m}^{*}\right)^{2} = \| \theta_{>l}^{*} \|_{2}^{2}, \notag
    \end{align}
    where we use the fact that $ \{\psi_{k,m} \}_{k \ge 0, m \ge 1}$ is an orthonormal basis. Then, using the Markov's inequality, we have 
    \begin{align}
        \mathrm{pr} \left( \sum\limits_{i=1}^{n} \beta_{i} > \frac{1}{\delta} n \| \theta_{>l}^{*} \|_{2}^{2} \right) \le \delta. \notag
    \end{align}
    We finish the proof.
\end{proof}

The following Lemma \ref{lemma cal of bias 1} and \ref{lemma cal of bias 2} calculate the convergence rates of two important quantities about the bias term.
\begin{lemma}\label{lemma cal of bias 1}
    Under the assumptions of Theorem \ref{theorem bias}, denote $\tilde{s} = \min\{s,2\}$, we have
    \begin{displaymath}
        \left\| \Sigma_{\le l}^{-1}  \theta^{*}_{\le l} \right\|_{2}^{2}  = \Theta \left( d^{(2-\tilde{s})l } \right).
    \end{displaymath}
    where $\Theta$ only involves constants depending on $s, \gamma, R_{\gamma}$ and $c_{0}$.
\end{lemma}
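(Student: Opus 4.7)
Because $\Sigma_{\le l}$ is the block-diagonal matrix with blocks $\mu_k I_{N(d,k)}$ for $k=0,\dots,l$, the target quantity admits the clean expansion
\[
\|\Sigma_{\le l}^{-1}\theta^*_{\le l}\|_2^2 \;=\; \sum_{k=0}^{l} \mu_k^{-2}\,b_k, \qquad b_k \;:=\; \sum_{m=1}^{N(d,k)} (\theta^*_{k,m})^2.
\]
The plan is to write the weights in the form $\mu_k^{-2} = \mu_k^{s-2}\cdot\mu_k^{-s}$ and then control the resulting sum $\sum_{k=0}^{l}\mu_k^{s-2} A_k$, where $A_k:=\mu_k^{-s} b_k$. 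Assumption~\ref{assumption source condition}(a) gives $\sum_{k\ge 0} A_k \le R_\gamma$, while Assumption~\ref{assumption source condition}(b) gives $A_l \ge c_0$ and $\sum_{k=0}^{l} b_k \ge c_0$. Lemma~\ref{lemma inner eigen} (applied with $p=l+1$) shows $\mu_k \asymp d^{-k}$ for $0\le k\le l+1$, so $\mu_k^{s-2} \asymp d^{k(2-s)}$, which is monotone in $k$ with direction determined by the sign of $2-s$.

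I then split into two cases according to $\tilde s=\min\{s,2\}$.

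\textbf{Case $s<2$} (so $\tilde s=s$ and the target rate is $d^{(2-s)l}$). Since $2-s>0$, the weights $\mu_k^{s-2}\asymp d^{k(2-s)}$ are maximized at $k=l$. For the upper bound, bound each weight by its value at $k=l$ and use $\sum_k A_k\le R_\gamma$:
\[
\sum_{k=0}^{l}\mu_k^{s-2}A_k \;\le\; \mu_l^{s-2}\sum_{k=0}^{l} A_k \;\lesssim\; d^{(2-s)l}.
\]
For the lower bound, keep only the $k=l$ term and use $A_l \ge c_0$: $\sum \mu_k^{s-2}A_k \ge \mu_l^{s-2}A_l \gtrsim d^{(2-s)l}$.

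\textbf{Case $s\ge 2$} (so $\tilde s=2$ and the target rate is $1$). Now $\mu_k^{s-2}\asymp d^{-k(s-2)}$ is decreasing in $k$ and bounded above by $\mu_0^{s-2}\asymp 1$. The upper bound $\sum \mu_k^{s-2}A_k \le \mu_0^{s-2}\sum A_k \lesssim 1$ is immediate. For the lower bound, revert to the form $\sum\mu_k^{-2}b_k$ and exploit that $\mu_k^{-2}$ is increasing in $k$, hence at least $\mu_0^{-2}\asymp 1$: $\sum_{k=0}^{l}\mu_k^{-2}b_k \ge \mu_0^{-2}\sum_{k=0}^{l} b_k \gtrsim c_0$.

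Combining the two cases yields $\|\Sigma_{\le l}^{-1}\theta^*_{\le l}\|_2^2 = \Theta(d^{(2-\tilde s)l})$ with constants depending only on $s,\gamma,R_\gamma,c_0$. There is no real obstacle here beyond correctly tracking the direction of monotonicity of the weights $\mu_k^{s-2}$; the mildly delicate point is that in the regime $s\ge 2$ the natural $A_k$-based lower bound degrades, which is why the lower bound in that case is more cleanly extracted from the $b_k$-form using Assumption~\ref{assumption source condition}(b)'s bulk condition $\sum_{k\le l} b_k\ge c_0$ rather than the tail condition $A_l\ge c_0$.
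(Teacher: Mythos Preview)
Your proposal is correct and follows essentially the same approach as the paper: the same decomposition $\mu_k^{-2}=\mu_k^{s-2}\cdot\mu_k^{-s}$, the same case split around $s=2$, the same use of Assumption~\ref{assumption source condition}(a) for upper bounds and of the two parts of Assumption~\ref{assumption source condition}(b) (the $A_l\ge c_0$ condition for $s<2$, the bulk condition $\sum_{k\le l} b_k\ge c_0$ for $s\ge 2$) for lower bounds. The only cosmetic difference is that you split at $s<2$ versus $s\ge 2$ while the paper splits at $s\le 2$ versus $s>2$; since all weights $\mu_k^{s-2}\asymp 1$ at $s=2$, either convention works.
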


\begin{proof}
    In this proof, all the constants only depend on $s, \gamma, R_{\gamma}$ and $c_{0}$. By the definition of $ \Sigma_{\le l}, \theta^{*}_{\le l}$ in Section \ref{subsection more notiations}, we have
    \begin{equation}\label{eq proof le l-1}
        \left\| \Sigma_{\le l}^{-1}  \theta^{*}_{\le l} \right\|_{2}^{2} = \sum\limits_{k = 0}^{l} \mu_{k}^{-2} \sum\limits_{m = 1 }^{N(d,k)} (\theta^{*}_{k,m})^{2} = \sum\limits_{k = 0}^{l} \mu_{k}^{s-2} \mu_{k}^{-s} \sum\limits_{m = 1 }^{N(d,k)} (\theta^{*}_{k,m})^{2}.
    \end{equation}
    On the one hand, when $0 \le s \le 2$, using Lemma \ref{lemma inner eigen}, we have
    \begin{align}
        \textrm{Eq.}\eqref{eq proof le l-1} &\lesssim \mu_{l}^{s-2} \cdot  \sum\limits_{k = 0}^{l} \mu_{k}^{-s} \sum\limits_{m = 1 }^{N(d,k)} (\theta^{*}_{k,m})^{2} \lesssim d^{(2-s)l} R_{\gamma}, \notag
    \end{align}
    where we using Eq.\eqref{assumption source part 1} in Assumption \ref{assumption source condition} for the last inequality. When $s>2$, Lemma \ref{lemma inner eigen} implies that $ \mu_{k}^{s-2} \lesssim 1, k=0, 1, \ldots l$. Therefore, we have $\textrm{Eq.}\eqref{eq proof le l-1} \lesssim R_{\gamma}$, and we prove that $\left\| \Sigma_{\le l}^{-1}  \theta^{*}_{\le l} \right\|_{2}^{2}  = O \left( d^{(2-\tilde{s})l } \right). $

    On the other hand, when $0 \le s \le 2$, using Eq.\eqref{ass of fi} in Assumption \ref{assumption source condition}, we have
    \begin{align}
        \textrm{Eq.}\eqref{eq proof le l-1} \ge \mu_{l}^{s-2} \mu_{l}^{-s} \sum\limits_{m = 1 }^{N(d,l)} (\theta^{*}_{k,m})^{2} \gtrsim d^{(2-s)l} c_{0}. \notag
    \end{align}
    When $s > 2$, using Eq.\eqref{ass of fi} in Assumption \ref{assumption source condition} and the fact that $\mu_{k} \lesssim 1, ~\text{for any}~ k=0,1,\ldots$, we have $\textrm{Eq.}\eqref{eq proof le l-1} \gtrsim \sum\limits_{k=0}^{l} \sum\limits_{m = 1 }^{N(d,0)} (\theta^{*}_{k,m})^{2} \gtrsim c_{0} $. Therefore, we have $\textrm{Eq.}\eqref{eq proof le l-1} \lesssim R_{\gamma}$, and we prove that $\left\| \Sigma_{\le l}^{-1}  \theta^{*}_{\le l} \right\|_{2}^{2}  = \Omega \left( d^{(2-\tilde{s})l } \right). $
    We finish the proof.
\end{proof}

\begin{lemma}\label{lemma cal of bias 2}
    Under the assumptions of Theorem \ref{theorem bias}, we have
    \begin{displaymath}
        \| \theta^{*}_{>l} \|_{2}^{2}  = \Theta \left( d^{-(l+1)s} \right),
    \end{displaymath}
    where $\Theta$ only involves constants depending on $s, \gamma, R_{\gamma}$ and $c_{0}$.
\end{lemma}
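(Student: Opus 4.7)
The plan is to sandwich $\|\theta^*_{>l}\|_2^2$ between $d^{-(l+1)s}$ (up to constants) using the two halves of Assumption~\ref{assumption source condition} together with the sharp eigenvalue estimates in Lemma~\ref{lemma inner eigen} and Lemma~\ref{lemma:monotone_of_eigenvalues_of_inner_product_kernels}. The key identity I would use is the trivial rewriting
\begin{equation*}
\|\theta^*_{>l}\|_2^2
= \sum_{k=l+1}^{\infty}\sum_{m=1}^{N(d,k)}(\theta^*_{k,m})^2
= \sum_{k=l+1}^{\infty}\mu_k^{\,s}\,\Bigl[\mu_k^{-s}\sum_{m=1}^{N(d,k)}(\theta^*_{k,m})^2\Bigr],
\end{equation*}
which turns the pure $L^2$ norm of the high-frequency part of $f^*$ into a weighted combination of its $[\mathcal{H}]^s$-coefficients.

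For the upper bound, I would factor out $\sup_{k\ge l+1}\mu_k^s$. Since eigenvalues are non-increasing, this supremum equals $\mu_{l+1}^s$, and Lemma~\ref{lemma inner eigen} (applied with $p=l$) gives $\mu_{l+1}\asymp d^{-(l+1)}$, hence $\mu_{l+1}^s\lesssim d^{-(l+1)s}$. The remaining factor $\sum_{k\ge l+1}\mu_k^{-s}\sum_m(\theta^*_{k,m})^2$ is bounded by the full $[\mathcal{H}]^s$-norm squared, which by Eq.~\eqref{assumption source part 1} is at most $R_\gamma$. This yields $\|\theta^*_{>l}\|_2^2\lesssim d^{-(l+1)s}$.

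For the lower bound, I would simply drop all terms except $k=l+1$, obtaining
\begin{equation*}
\|\theta^*_{>l}\|_2^2 \;\ge\; \sum_{m=1}^{N(d,l+1)}(\theta^*_{l+1,m})^2
\;=\; \mu_{l+1}^{\,s}\,\Bigl[\mu_{l+1}^{-s}\sum_{m=1}^{N(d,l+1)}(\theta^*_{l+1,m})^2\Bigr],
\end{equation*}
and then invoke the $p=l+1$ case of Eq.~\eqref{ass of fi} together with $\mu_{l+1}\asymp d^{-(l+1)}$ to get a $\gtrsim c_0\,d^{-(l+1)s}$ bound. Combining both sides yields $\|\theta^*_{>l}\|_2^2 = \Theta(d^{-(l+1)s})$.

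There is no real obstacle here: the proof is pure algebra once the source-condition decomposition is written down. The only thing to be careful about is that the monotonicity $\mu_k\le\mu_{l+1}$ for $k\ge l+1$ (or equivalently Lemma~\ref{lemma:monotone_of_eigenvalues_of_inner_product_kernels} with $p=l$) is what makes the upper bound work uniformly in $k$, and that the lower bound uses precisely the $p=l+1$ clause of Assumption~\ref{assumption source condition}(b), which was put there for exactly this purpose.
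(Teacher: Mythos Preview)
Your proposal is correct and follows essentially the same argument as the paper: the same rewriting $\|\theta^*_{>l}\|_2^2=\sum_{k\ge l+1}\mu_k^{s}\bigl[\mu_k^{-s}\sum_m(\theta^*_{k,m})^2\bigr]$, the same upper bound via $\mu_k\lesssim\mu_{l+1}$ (Lemma~\ref{lemma:monotone_of_eigenvalues_of_inner_product_kernels}) and Assumption~\ref{assumption source condition}(a), and the same lower bound by retaining only the $k=l+1$ term and invoking the $p=l+1$ clause of Eq.~\eqref{ass of fi}. The only cosmetic difference is that the paper cites Lemma~\ref{lemma:monotone_of_eigenvalues_of_inner_product_kernels} directly rather than first appealing to monotonicity of the $\mu_k$.
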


\begin{proof}
    In this proof, all the constants only depend on $s, \gamma, R_{\gamma}$ and $c_{0}$. By the definition of $ \theta^{*}_{> l}$ in Section \ref{subsection more notiations}, we have
    \begin{equation}\label{eq proof ge l-1}
        \| \theta^{*}_{>l} \|_{2}^{2} = \sum\limits_{k=l+1}^{\infty} \mu_{k}^{s} \mu_{k}^{-s}  \sum\limits_{m = 1 }^{N(d,k)} (\theta^{*}_{k,m})^{2}.
    \end{equation}
    On the one hand, using $ \mu_{k} \lesssim \mu_{l+1} ~\text{for any}~ k \ge l+1$, which can be proved by Lemma \ref{lemma:monotone_of_eigenvalues_of_inner_product_kernels}.
    \begin{align}
        \textrm{Eq.}\eqref{eq proof ge l-1} \lesssim \mu_{l+1}^{s} \sum\limits_{k=l+1}^{\infty} \mu_{k}^{-s} \sum\limits_{m = 1 }^{N(d,k)} (\theta^{*}_{k,m})^{2} \lesssim \mu_{l+1}^{s} R_{\gamma} \lesssim d^{-(l+1)s} R_{\gamma}, \notag
    \end{align}
    where we use Eq.\eqref{assumption source part 1} in Assumption \ref{assumption source condition} and Lemma \ref{lemma inner eigen} for the last inequality.

    On the other hand, using Eq.\eqref{ass of fi} in Assumption \ref{assumption source condition} we have
    \begin{align}
        \textrm{Eq.}\eqref{eq proof ge l-1} \ge \mu_{l+1}^{s} \cdot \mu_{l+1}^{-s} \sum\limits_{m = 1 }^{N(d,l+1)} (\theta^{*}_{k,m})^{2} \gtrsim d^{-(l+1)s} c_{0}. \notag
    \end{align}
    We finish the proof.
\end{proof}

The following Lemma \ref{lemma B11} -- \ref{lemma B23} give several bounds that were used in the proof of Theorem \ref{theorem bias}. It can be checked that all the constants in the proof only depend on $s, \gamma, R_{\gamma}$, $c_{0}, c_{1}$, and $ c_{2}$.

\begin{lemma}\label{lemma B11}
Define $ B_{1,1} = \left\| \Sigma_{\le l} \Psi^{\top}_{\le l} K^{-1} \Psi_{\le l} \theta^{*}_{\le l} - \theta^{*}_{\le l}  \right\|_{2}^{2} $. Under assumptions of Theorem \ref{theorem bias}, we have
    \begin{displaymath}
        B_{1,1} = \Theta_{\mathbb{P}} \left( d^{(2-\tilde{s})l-2 \gamma} \right),
    \end{displaymath}
    where $\Theta_{\mathbb{P}}$ only involves constants depending on $s, \gamma, R_{\gamma}$, $c_{0}, c_{1}$ and $ c_{2}$.
\end{lemma}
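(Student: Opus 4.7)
The plan is to rewrite the matrix $\Sigma_{\le l}\Psi_{\le l}^{\top}K^{-1}\Psi_{\le l}-\mathrm{I}_{B_{l}}$ into a clean form by applying the Woodbury identity to the block decomposition $K=K_{>l}+\Psi_{\le l}\Sigma_{\le l}\Psi_{\le l}^{\top}$, and then to read off the exact order of $B_{1,1}$ by combining Lemma \ref{lemma lambda max min} (to control the resulting inverse) with Lemma \ref{lemma cal of bias 1} (to control $\|\Sigma_{\le l}^{-1}\theta_{\le l}^{*}\|_{2}^{2}$).

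First I would set $A:=\Psi_{\le l}^{\top}K_{>l}^{-1}\Psi_{\le l}$ and apply the Woodbury identity
\begin{displaymath}
K^{-1}=K_{>l}^{-1}-K_{>l}^{-1}\Psi_{\le l}(\Sigma_{\le l}^{-1}+A)^{-1}\Psi_{\le l}^{\top}K_{>l}^{-1}
\end{displaymath}
to conclude that $\Psi_{\le l}^{\top}K^{-1}\Psi_{\le l}=A-A(\Sigma_{\le l}^{-1}+A)^{-1}A$. A short algebraic manipulation, amounting to the push-through identity
\begin{displaymath}
\Sigma_{\le l}\,\Psi_{\le l}^{\top}K^{-1}\Psi_{\le l}-\mathrm{I}_{B_{l}}=-(\Sigma_{\le l}^{-1}+A)^{-1}\Sigma_{\le l}^{-1},
\end{displaymath}
then gives the closed form
\begin{displaymath}
B_{1,1}=\bigl\|(\Sigma_{\le l}^{-1}+\Psi_{\le l}^{\top}K_{>l}^{-1}\Psi_{\le l})^{-1}\Sigma_{\le l}^{-1}\theta_{\le l}^{*}\bigr\|_{2}^{2}.
\end{displaymath}

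Next, I would factor out a $1/n$ from inside the inverse to match exactly the matrix appearing in Lemma \ref{lemma lambda max min}, obtaining
\begin{displaymath}
B_{1,1}=\frac{1}{n^{2}}\Bigl\|\Bigl(\tfrac{\Sigma_{\le l}^{-1}}{n}+\tfrac{\Psi_{\le l}^{\top}K_{>l}^{-1}\Psi_{\le l}}{n}\Bigr)^{-1}\Sigma_{\le l}^{-1}\theta_{\le l}^{*}\Bigr\|_{2}^{2}.
\end{displaymath}
Lemma \ref{lemma lambda max min} states that the maximum and minimum eigenvalues of the inverse matrix above are both $\Theta_{\mathbb{P}}(1)$, so sandwiching $\|M v\|_{2}^{2}$ between $\lambda_{\min}(M)^{2}\|v\|_{2}^{2}$ and $\lambda_{\max}(M)^{2}\|v\|_{2}^{2}$ with $M=(\Sigma_{\le l}^{-1}/n+A/n)^{-1}$ yields $B_{1,1}=\Theta_{\mathbb{P}}\bigl(\|\Sigma_{\le l}^{-1}\theta_{\le l}^{*}\|_{2}^{2}/n^{2}\bigr)$.

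Finally, Lemma \ref{lemma cal of bias 1} gives $\|\Sigma_{\le l}^{-1}\theta_{\le l}^{*}\|_{2}^{2}=\Theta(d^{(2-\tilde{s})l})$, and $n\asymp d^{\gamma}$ gives $n^{2}\asymp d^{2\gamma}$. Combining the two estimates produces $B_{1,1}=\Theta_{\mathbb{P}}(d^{(2-\tilde{s})l-2\gamma})$, as claimed. The only non-routine step is the Woodbury collapse into the single inverse $(\Sigma_{\le l}^{-1}+A)^{-1}\Sigma_{\le l}^{-1}$; after that, the two-sided bound is immediate from the previously established lemmas since Lemma \ref{lemma lambda max min} already delivers matching upper and lower eigenvalue control (this is what lets the argument produce $\Theta_{\mathbb{P}}$ rather than just $O_{\mathbb{P}}$).
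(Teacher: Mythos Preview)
Your proposal is correct and follows essentially the same approach as the paper: the paper also collapses $B_{1,1}$ via the Sherman--Morrison--Woodbury identity into $\bigl\|(\Sigma_{\le l}^{-1}+\Psi_{\le l}^{\top}K_{>l}^{-1}\Psi_{\le l})^{-1}\Sigma_{\le l}^{-1}\theta_{\le l}^{*}\bigr\|_{2}^{2}$ and then invokes Lemma~\ref{lemma lambda max min} and Lemma~\ref{lemma cal of bias 1} to obtain the $\Theta_{\mathbb{P}}(d^{(2-\tilde{s})l-2\gamma})$ rate. The only cosmetic difference is that you expand $K^{-1}$ first and then simplify, whereas the paper applies Woodbury directly to $\Sigma_{\le l}-\Sigma_{\le l}\Psi_{\le l}^{\top}K^{-1}\Psi_{\le l}\Sigma_{\le l}$; both routes yield the identical closed form.
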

\begin{proof}
Applying Sherman–Morrison–Woodbury formula to $ \left( \Sigma_{\le l}^{-1} + \Psi_{\le l}^{\top} K_{>l}^{-1} \Psi_{\le l}  \right)^{-1}$, we have
     \begin{align}\label{eq B11}
        B_{1,1} &= \left\| \left(\mathrm{I}_{B_{l}} - \Sigma_{\le l} \Psi^{\top}_{\le l} K^{-1} \Psi_{\le l}\right)  \theta^{*}_{\le l} \right\|_{2}^{2} \notag \\
        &= \left\| \left( \Sigma_{\le l} - \Sigma_{\le l} \Psi^{\top}_{\le l} \left( \Psi_{\le l} \Sigma_{\le l} \Psi_{\le l}^{\top} + K_{>l} \right)^{-1} \Psi_{\le l} \Sigma_{\le l}\right)  \Sigma_{\le l}^{-1}  \theta^{*}_{\le l} \right\|_{2}^{2} \notag \\
        &= \left\| \left( \Sigma_{\le l}^{-1} + \Psi_{\le l}^{\top} K_{>l}^{-1} \Psi_{\le l}  \right)^{-1}  \Sigma_{\le l}^{-1}  \theta^{*}_{\le l} \right\|_{2}^{2}.
    \end{align}
    % \begin{equation}\label{eq lambda max min}
    %     \lambda_{\text{max}} \left[  \left( \frac{\Sigma_{\le l}^{-1}}{n} + \frac{\Psi_{\le l}^{\top} K_{>l}^{-1} \Psi_{\le l}}{n}   \right)^{-1} \right]  \asymp \lambda_{\text{min}} \left[  \left( \frac{\Sigma_{\le l}^{-1}}{n} + \frac{\Psi_{\le l}^{\top} K_{>l}^{-1} \Psi_{\le l}}{n}  \right)^{-1} \right] = \Theta_{\mathbb{P}}(1).
    % \end{equation}
    Using Lemma \ref{lemma lambda max min} and Lemma \ref{lemma cal of bias 1}, we prove that $ B_{1,1} = \Theta_{\mathbb{P}} \left( n^{-2} \left\| \Sigma_{\le l}^{-1}  \theta^{*}_{\le l} \right\|_{2}^{2} \right) = \Theta_{\mathbb{P}} \left( d^{(2-\tilde{s})l-2 \gamma} \right).$
    % \begin{displaymath}
    %     B_{1,1} = \Theta_{\mathbb{P}} \left( \frac{1}{n^{2}} \left\| \Sigma_{\le l}^{-1}  \theta^{*}_{\le l} \right\|_{2}^{2} \right) = \Theta_{\mathbb{P}} \left( d^{(2-\tilde{s})l-2 \gamma} \right).
    % \end{displaymath}
    We finish the proof.
\end{proof}

\begin{lemma}\label{lemma B12}
Define $B_{1,2} = \left\|  \Sigma_{\le l} \Psi^{\top}_{\le l} K^{-1} \Psi_{> l} \theta^{*}_{> l} \right\|_{2}^{2}$. Under assumptions of Theorem \ref{theorem bias}, we have
    \begin{displaymath}
       B_{1,2} = O_{\mathbb{P}} \left( d^{-(l+1)s} \right),
    \end{displaymath}
    where $O_{\mathbb{P}}$ only involves constants depending on $s, \gamma, R_{\gamma}$, $c_{0}, c_{1}$ and $ c_{2}$.

\end{lemma}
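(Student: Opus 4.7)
The plan is to first replace the full kernel inverse $K^{-1}$ by the much more tractable $K_{>l}^{-1}$ via a Sherman--Morrison--Woodbury identity, and then bound each factor separately using the concentration lemmas in Section~\ref{subsection eigen of inner property}.

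\textbf{Step 1 (identity).} Writing $K=K_{>l}+\Psi_{\le l}\Sigma_{\le l}\Psi_{\le l}^{\top}$ and setting
\begin{displaymath}
A \;=\; \Sigma_{\le l}^{-1}+\Psi_{\le l}^{\top}K_{>l}^{-1}\Psi_{\le l},
\end{displaymath}
a direct Woodbury-type manipulation (analogous to the one used in the proof of Lemma~\ref{lemma B11}) gives the clean identity
\begin{displaymath}
\Sigma_{\le l}\Psi_{\le l}^{\top}K^{-1}=A^{-1}\Psi_{\le l}^{\top}K_{>l}^{-1}.
\end{displaymath}
I would verify this by checking $A\Sigma_{\le l}\Psi_{\le l}^{\top}=\Psi_{\le l}^{\top}K_{>l}^{-1}K$, which follows from $\Psi_{\le l}\Sigma_{\le l}\Psi_{\le l}^{\top}=K_{\le l}$ and $K_{\le l}+K_{>l}=K$. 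Consequently,
\begin{displaymath}
B_{1,2}\;=\;\left\|A^{-1}\Psi_{\le l}^{\top}K_{>l}^{-1}\Psi_{>l}\theta_{>l}^{*}\right\|_{2}^{2}\;\le\;\|A^{-1}\|_{\mathrm{op}}^{2}\,\|\Psi_{\le l}\|_{\mathrm{op}}^{2}\,\|K_{>l}^{-1}\|_{\mathrm{op}}^{2}\,\|\Psi_{>l}\theta_{>l}^{*}\|_{2}^{2}.
\end{displaymath}

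\textbf{Step 2 (bounding the four factors).} Lemma~\ref{lemma lambda max min} gives $\|A^{-1}\|_{\mathrm{op}}=\Theta_{\mathbb{P}}(1/n)$. Lemma~\ref{lemma psi top psi} yields $\|\Psi_{\le l}\|_{\mathrm{op}}^{2}=\|\Psi_{\le l}^{\top}\Psi_{\le l}\|_{\mathrm{op}}=\Theta_{\mathbb{P}}(n)$. From Eq.~\eqref{eq k ge l in lemma} in Lemma~\ref{lemma psi psi top} and $\kappa_{1}=\Theta(1)$ (Lemma~\ref{lemma rate of k1 k2}) we get $\|K_{>l}^{-1}\|_{\mathrm{op}}=\Theta_{\mathbb{P}}(1)$. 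Finally, Lemma~\ref{lemma ge l l2} controls the remaining random vector: $\|\Psi_{>l}\theta_{>l}^{*}\|_{2}^{2}=O_{\mathbb{P}}(n\|\theta_{>l}^{*}\|_{2}^{2})$, and Lemma~\ref{lemma cal of bias 2} evaluates the deterministic norm $\|\theta_{>l}^{*}\|_{2}^{2}=\Theta(d^{-(l+1)s})$.

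\textbf{Step 3 (combine).} Multiplying these bounds gives
\begin{displaymath}
B_{1,2}\;=\;O_{\mathbb{P}}\!\left(\frac{1}{n^{2}}\cdot n\cdot 1\cdot n\cdot d^{-(l+1)s}\right)=O_{\mathbb{P}}\!\left(d^{-(l+1)s}\right),
\end{displaymath}
which is the desired conclusion. The main (but modest) obstacle is setting up the Woodbury identity so that the $\Sigma_{\le l}$ factor collapses into the well-conditioned matrix $A^{-1}$; once that is done, the bookkeeping on the remaining factors is routine since every piece is already handled by a lemma in Section~\ref{subsection eigen of inner property} or Section~\ref{subsection auxiliary}. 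Note that a crude triangle-inequality approach that does not exploit Woodbury would only yield an extra $\|\Sigma_{\le l}\|_{\mathrm{op}}^{2}$ factor, which is too large; the cancellation $\Sigma_{\le l}\Psi_{\le l}^{\top}K^{-1}=A^{-1}\Psi_{\le l}^{\top}K_{>l}^{-1}$ is what makes the bound tight.
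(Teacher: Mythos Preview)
Your proof is correct and follows essentially the same approach as the paper: both establish the Woodbury-type identity $\Sigma_{\le l}\Psi_{\le l}^{\top}K^{-1}=A^{-1}\Psi_{\le l}^{\top}K_{>l}^{-1}$ (your $A$ is the paper's $\Sigma_{\le l}^{-1}+\Psi_{\le l}^{\top}K_{>l}^{-1}\Psi_{\le l}$, cf.\ Eq.~\eqref{eq b22 mid}--\eqref{eq proof t2 subs}), then invoke Lemma~\ref{lemma lambda max min}, Lemma~\ref{lemma psi psi top}, Lemma~\ref{lemma ge l l2} and Lemma~\ref{lemma cal of bias 2}. Your final bounding via operator norms is in fact slightly cleaner than the paper's, which routes through an SVD of $\Psi_{\le l}$ to control $\frac{1}{n}\Psi_{\le l}\Psi_{\le l}^{\top}$; your direct use of $\|\Psi_{\le l}\|_{\mathrm{op}}^{2}=\Theta_{\mathbb{P}}(n)$ from Lemma~\ref{lemma psi top psi} accomplishes the same thing without the extra decomposition.
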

\begin{proof}
We first make the following calculation:
    \begin{align}\label{eq b22 mid}
        K^{-1} \Psi_{\le l} \Sigma_{\le l}^{1/2} &= \left( K_{>l} + \Psi_{\le l} \Sigma_{\le l} \Psi_{\le l}^{\top} \right)^{-1} \Psi_{\le l} \Sigma_{\le l}^{1/2} \notag \\
        &= \left[  K_{>l}^{-1} - K_{>l}^{-1} \Psi_{\le l} \Sigma_{\le l}^{1/2} \left(\mathrm{I}_{B_{l}} + \Sigma_{\le l}^{1/2}  \Psi_{\le l}^{\top} K_{>l}^{-1} \Psi_{\le l} \Sigma_{\le l}^{1/2} \right)^{-1}  \Sigma_{\le l}^{1/2}  \Psi_{\le l}^{\top} K_{>l}^{-1} \right] \Psi_{\le l} \Sigma_{\le l}^{1/2} \notag \\
        &= K_{>l}^{-1} \Psi_{\le l} \Sigma_{\le l}^{1/2} \left[ \mathrm{I}_{B_{l}} - \left( \mathrm{I}_{B_{l}} +  \Sigma_{\le l}^{1/2}  \Psi_{\le l}^{\top} K_{>l}^{-1} \Psi_{\le l} \Sigma_{\le l}^{1/2}\right)^{-1}  \left( \mathrm{I}_{B_{l}} +  \Sigma_{\le l}^{1/2}  \Psi_{\le l}^{\top} K_{>l}^{-1} \Psi_{\le l} \Sigma_{\le l}^{1/2} - \mathrm{I}_{B_{l}}\right)   \right] \notag \\
        &= K_{>l}^{-1} \Psi_{\le l} \Sigma_{\le l}^{1/2} \left( \mathrm{I}_{B_{l}} +  \Sigma_{\le l}^{1/2}  \Psi_{\le l}^{\top} K_{>l}^{-1} \Psi_{\le l} \Sigma_{\le l}^{1/2}\right)^{-1},
    \end{align}
    where we use the Sherman–Morrison–Woodbury formula for the second equality. Then we have
    \begin{align}\label{eq proof t2 subs}
        &\left( K^{-1} \Psi_{\le l} \Sigma_{\le l}^{1/2} \cdot \Sigma_{\le l} \cdot \Sigma_{\le l}^{1/2} \Psi_{\le l}^{\top}  K^{-1}  \right) \notag \\
        &=  K_{>l}^{-1} \Psi_{\le l} \Sigma_{\le l}^{1/2} \left( \mathrm{I}_{B_{l}} +  \Sigma_{\le l}^{1/2}  \Psi_{\le l}^{\top} K_{>l}^{-1} \Psi_{\le l} \Sigma_{\le l}^{1/2}\right)^{-1} \cdot \Sigma_{\le l} \cdot \left( \mathrm{I}_{B_{l}} +  \Sigma_{\le l}^{1/2}  \Psi_{\le l}^{\top} K_{>l}^{-1}  \Psi_{\le l} \Sigma_{\le l}^{1/2}\right)^{-1} \Sigma_{\le l}^{1/2}  \Psi_{\le l}^{\top} K_{>l}^{-1} \notag \\
        &= \frac{1}{n^{2}} K_{>l}^{-1} \Psi_{\le l} \left( \frac{\Sigma_{\le l}^{-1}}{n} + \frac{\Psi_{\le l}^{\top} K_{>l}^{-1} \Psi_{\le l}}{n}  \right)^{-2} \Psi_{\le l}^{\top} K_{>l}^{-1}.
    \end{align}
    Plugging Eq.\eqref{eq proof t2 subs} into the definition of $B_{1,2}$, we have
    \begin{align}\label{eq b12-1}
       B_{1,2} &= \left( \Psi_{>l} \theta^{*}_{>l}  \right)^{\top} \left( K^{-1} \Psi_{\le l} \Sigma_{\le l} \Sigma_{\le l} \Psi_{\le l}^{\top}  K^{-1}  \right) \left( \Psi_{>l} \theta^{*}_{>l}  \right) \notag \\
       &=  \frac{1}{n^{2}} \left( \Psi_{>l} \theta^{*}_{>l}  \right)^{\top} K_{>l}^{-1} \Psi_{\le l} \left( \frac{\Sigma_{\le l}^{-1}}{n} + \frac{\Psi_{\le l}^{\top} K_{>l}^{-1} \Psi_{\le l}}{n}  \right)^{-2} \Psi_{\le l}^{\top} K_{>l}^{-1} \left( \Psi_{>l} \theta^{*}_{>l}  \right) \notag \\
       &\asymp \frac{1}{n^{2}} \left( \Psi_{>l} \theta^{*}_{>l}  \right)^{\top} K_{>l}^{-1} \Psi_{\le l} \Psi_{\le l}^{\top} K_{>l}^{-1} \left( \Psi_{>l} \theta^{*}_{>l}  \right) \notag \\
       &\asymp  \frac{1}{n^{2}} \left( \Psi_{>l} \theta^{*}_{>l}  \right)^{\top} \Psi_{\le l} \Psi_{\le l}^{\top} \left( \Psi_{>l} \theta^{*}_{>l}  \right),
   \end{align}
   where we use Lemma \ref{lemma lambda max min} for the third line; use Eq.\eqref{eq k ge l in lemma} in Lemma \ref{lemma psi psi top} and $\kappa_{1} = \Theta(1)$ for the last line.

   %  Using Lemma \ref{lemma T2 equals} and the fact $ \kappa_{1} = \Theta(1)$ in Lemma \ref{lemma rate of k1 k2}, we have
   % \begin{align}\label{eq b12-1}
   %     B_{1,2} &= \frac{1}{n} \left( \Psi_{>l} \theta^{*}_{>l}  \right)^{\top} \left( n K^{-1} \Psi_{\le l} \Sigma_{\le l} \Sigma_{\le l} \Psi_{\le l}^{\top}  K^{-1}  \right) \left( \Psi_{>l} \theta^{*}_{>l}  \right) \notag \\
   %     &=  \frac{1}{n} \left( \Psi_{>l} \theta^{*}_{>l}  \right)^{\top}   \left( \frac{\Psi_{\le l} S_{\le l} \Psi_{\le l}^{\top}}{n} + \Delta_{2} \right) \left( \Psi_{>l} \theta^{*}_{>l}  \right) \notag \\
   %     &\lesssim \frac{1}{n} \left( \Psi_{>l} \theta^{*}_{>l}  \right)^{\top}   \left( \frac{\Psi_{\le l} \Psi_{\le l}^{\top}}{n} + \mathrm{I}_{n} \right) \left( \Psi_{>l} \theta^{*}_{>l}  \right).
   % \end{align}
   Denote the SVD decomposition of $ \Psi_{\le l} $ as $ \Psi_{\le l} = n^{1/2} ~ O H V^{\top} $, where $ O \in \mathbb{R}^{n \times n}$ and $ V \in \mathbb{R}^{B_{l} \times B_{l}}$ are orthogonal matrices; $H = \left[ H_{*}; 0 \right] \in \mathbb{R}^{n \times B_{l}} $ and $H_{*} \in \mathbb{R}^{B_{l} \times B_{l}} $ is a diagonal matrix. Then Lemma \ref{lemma psi top psi} implies that $ H_{*} = I_{B_{l}} + \Delta_{h}$, with $ \| \Delta_{h} \|_{\mathrm{op}} = o_{\mathbb{P}}(1)$. Therefore, going back to Eq.\eqref{eq b12-1}, we have
   \begin{align}
       B_{1,2} &\asymp \frac{1}{n} \left( \Psi_{>l} \theta^{*}_{>l}  \right)^{\top}   \left( \frac{\Psi_{\le l} \Psi_{\le l}^{\top}}{n} \right) \left( \Psi_{>l} \theta^{*}_{>l}  \right) = \frac{1}{n} \left( \Psi_{>l} \theta^{*}_{>l}  \right)^{\top}   \left( O H V^{\top} V H^{\top} O^{\top} \right) \left( \Psi_{>l} \theta^{*}_{>l}  \right) \notag \\
       &= \frac{1}{n} \left( \Psi_{>l} \theta^{*}_{>l}  \right)^{\top}   \left( O \text{Diag}\left\{ H_{*}, \mathrm{0}_{n-B_{l}}\right\}  O^{\top} \right) \left( \Psi_{>l} \theta^{*}_{>l}  \right) \notag \\
       &\lesssim \frac{1}{n} \left( \Psi_{>l} \theta^{*}_{>l}  \right)^{\top} \left( \Psi_{>l} \theta^{*}_{>l}  \right) \notag \\
       &= O_{\mathbb{P}} \left(  \| \theta^{*}_{>l} \|_{2}^{2} \right) = O_{\mathbb{P}} \left( d^{-(l+1)s} \right), \notag
   \end{align}
where for the last line, we use Lemma \ref{lemma ge l l2}, Lemma \ref{lemma cal of bias 2} and the fact that $ B_{l} \asymp d^{l} \ll n$ (Eq.\eqref{eq Bl asymp}). We finish the proof.
\end{proof}

\begin{lemma}\label{lemma B22}
Define $B_{2,2} =  \left\| \Sigma_{> l} \Psi^{\top}_{> l} K^{-1} \Psi_{\le l} \theta^{*}_{\le l}  \right\|_{2}^{2}$. Under assumptions of Theorem \ref{theorem bias}, we have
    \begin{equation}\label{eq def of b22}
        B_{2,2} = o_{\mathbb{P}} \left( d^{(2-\tilde{s})l-2 \gamma} \right),
    \end{equation}
    where $o_{\mathbb{P}}$ only involves constants depending on $s, \gamma, R_{\gamma}$, $c_{0}, c_{1}$ and $ c_{2}$.
\end{lemma}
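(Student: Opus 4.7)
The plan is to reduce $B_{2,2}$ to a scalar quadratic form involving $\Sigma_{\le l}^{-1}\theta^{*}_{\le l}$, which is already controlled by Lemma \ref{lemma cal of bias 1}, and then show that the coefficient in front carries enough powers of $d$ to beat $d^{(2-\tilde s)l - 2\gamma}$.

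First I would expand the squared norm as
\begin{displaymath}
B_{2,2} \;=\; (\theta^{*}_{\le l})^{\top}\Psi_{\le l}^{\top}K^{-1}\bigl(\Psi_{>l}\Sigma_{>l}^{2}\Psi_{>l}^{\top}\bigr)K^{-1}\Psi_{\le l}\theta^{*}_{\le l},
\end{displaymath}
and apply Eq.\eqref{eq k2 ge l in lemma} of Lemma \ref{lemma psi psi top} to replace $\Psi_{>l}\Sigma_{>l}^{2}\Psi_{>l}^{\top}$ by $\kappa_{2}(\mathrm{I}_n+\Delta_{2})$. Together with $\|\Delta_{2}\|_{\mathrm{op}}=o_{\mathbb{P}}(1)$ this yields
\begin{displaymath}
B_{2,2} \;=\; \Theta_{\mathbb{P}}\!\bigl(\kappa_{2}\bigr)\cdot(\theta^{*}_{\le l})^{\top}\Psi_{\le l}^{\top}K^{-2}\Psi_{\le l}\,\theta^{*}_{\le l}.
\end{displaymath}

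Next, exactly as in Lemma \ref{lemma B12}, I would apply the Sherman--Morrison--Woodbury identity to $K^{-1}=(K_{>l}+\Psi_{\le l}\Sigma_{\le l}\Psi_{\le l}^{\top})^{-1}$ and simplify to obtain
\begin{displaymath}
K^{-1}\Psi_{\le l} \;=\; K_{>l}^{-1}\Psi_{\le l}\bigl(\Sigma_{\le l}^{-1}+\Psi_{\le l}^{\top}K_{>l}^{-1}\Psi_{\le l}\bigr)^{-1}\Sigma_{\le l}^{-1}.
\end{displaymath}
Substituting this twice gives
\begin{displaymath}
\Psi_{\le l}^{\top}K^{-2}\Psi_{\le l} \;=\; \Sigma_{\le l}^{-1}\Bigl(\tfrac{\Sigma_{\le l}^{-1}}{n}+\tfrac{\Psi_{\le l}^{\top}K_{>l}^{-1}\Psi_{\le l}}{n}\Bigr)^{-1}\frac{\Psi_{\le l}^{\top}K_{>l}^{-2}\Psi_{\le l}}{n^{2}}\Bigl(\tfrac{\Sigma_{\le l}^{-1}}{n}+\tfrac{\Psi_{\le l}^{\top}K_{>l}^{-1}\Psi_{\le l}}{n}\Bigr)^{-1}\Sigma_{\le l}^{-1}.
\end{displaymath}
The two $(\Sigma_{\le l}^{-1}/n+\Psi_{\le l}^{\top}K_{>l}^{-1}\Psi_{\le l}/n)^{-1}$ factors are $\Theta_{\mathbb{P}}(1)$ in operator norm by Lemma \ref{lemma lambda max min}. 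For the middle factor, Eq.\eqref{eq k ge l in lemma} of Lemma \ref{lemma psi psi top} together with $\kappa_{1}=\Theta(1)$ gives $K_{>l}^{-2}=\Theta_{\mathbb{P}}(1)\cdot\mathrm{I}_{n}$, and Lemma \ref{lemma psi top psi} gives $\Psi_{\le l}^{\top}\Psi_{\le l}/n = \mathrm{I}_{B_{l}}+o_{\mathbb{P}}(1)$; hence $\Psi_{\le l}^{\top}K_{>l}^{-2}\Psi_{\le l}/n^{2} = \Theta_{\mathbb{P}}(1/n)\cdot\mathrm{I}_{B_{l}}$. Collecting these estimates,
\begin{displaymath}
(\theta^{*}_{\le l})^{\top}\Psi_{\le l}^{\top}K^{-2}\Psi_{\le l}\,\theta^{*}_{\le l} \;=\; \Theta_{\mathbb{P}}\!\Bigl(\tfrac{1}{n}\|\Sigma_{\le l}^{-1}\theta^{*}_{\le l}\|_{2}^{2}\Bigr).
\end{displaymath}

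Finally, I would plug in $\kappa_{2}=\Theta(d^{-(l+1)})$ from Lemma \ref{lemma rate of k1 k2}, $n\asymp d^{\gamma}$, and $\|\Sigma_{\le l}^{-1}\theta^{*}_{\le l}\|_{2}^{2}=\Theta(d^{(2-\tilde s)l})$ from Lemma \ref{lemma cal of bias 1}, obtaining
\begin{displaymath}
B_{2,2} \;=\; \Theta_{\mathbb{P}}\!\bigl(d^{(2-\tilde s)l - \gamma - (l+1)}\bigr) \;=\; d^{\gamma-l-1}\cdot\Theta_{\mathbb{P}}\!\bigl(d^{(2-\tilde s)l - 2\gamma}\bigr).
\end{displaymath}
Since $\gamma\in(l,l+1)$ is non-integer, $\gamma-l-1<0$, so the prefactor is $o(1)$ and the claim $B_{2,2}=o_{\mathbb{P}}(d^{(2-\tilde s)l-2\gamma})$ follows. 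I expect the only real obstacle to be keeping the Woodbury bookkeeping straight when passing from $K^{-1}\Psi_{\le l}$ to the symmetric quadratic form $\Psi_{\le l}^{\top}K^{-2}\Psi_{\le l}$; once that form is in hand, the three lemmas on $K_{>l}$, $\Psi_{\le l}^{\top}\Psi_{\le l}/n$, and $(\Sigma_{\le l}^{-1}/n+\Psi_{\le l}^{\top}K_{>l}^{-1}\Psi_{\le l}/n)^{-1}$ do the rest essentially mechanically.
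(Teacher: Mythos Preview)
Your proposal is correct and follows the same overall architecture as the paper's proof: apply the Sherman--Morrison--Woodbury identity to $K^{-1}\Psi_{\le l}$, use Lemma~\ref{lemma lambda max min} to remove the $\bigl(\Sigma_{\le l}^{-1}/n+\Psi_{\le l}^{\top}K_{>l}^{-1}\Psi_{\le l}/n\bigr)^{-1}$ factors, and finish with Lemma~\ref{lemma cal of bias 1} together with $n\asymp d^{\gamma}$. The one genuine difference is how you dispose of the $\Sigma_{>l}^{2}$ in the middle: the paper peels off one factor via $\|\Sigma_{>l}\|_{\mathrm{op}}\lesssim d^{-(l+1)}$ and then uses $\Psi_{>l}\Sigma_{>l}\Psi_{>l}^{\top}=K_{>l}$, whereas you immediately invoke Eq.~\eqref{eq k2 ge l in lemma} to replace $\Psi_{>l}\Sigma_{>l}^{2}\Psi_{>l}^{\top}$ by $\kappa_{2}(\mathrm{I}_{n}+\Delta_{2})$ and then land exactly on the matrix $\Psi_{\le l}^{\top}K^{-2}\Psi_{\le l}$ of Lemma~\ref{lemma trans in V2}. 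Your route is slightly cleaner (it even yields a two-sided $\Theta_{\mathbb{P}}$ estimate rather than just an upper bound) and reuses more of the existing machinery; the paper's route avoids reproving the identity in Lemma~\ref{lemma trans in V2}. Either way the extra decay is the same factor $d^{\gamma-l-1}=o(1)$, and the conclusion follows.
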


\begin{proof}
    Using the calculation Eq.\eqref{eq b22 mid} again and plugging it into Eq.\eqref{eq def of b22}, we have
    \begin{align}
        B_{2,2} &= \left\| \Sigma_{> l} \Psi^{\top}_{> l} \left( K^{-1} \Psi_{\le l} \Sigma_{\le l}^{1/2} \right) \Sigma_{\le l}^{-1/2}  \theta^{*}_{\le l}  \right\|_{2}^{2} \notag \\
        &= \left\| \Sigma_{> l} \Psi^{\top}_{> l}  K_{>l}^{-1} \Psi_{\le l} \Sigma_{\le l}^{1/2} \left( \mathrm{I}_{B_{l}} +  \Sigma_{\le l}^{1/2}  \Psi_{\le l}^{\top} K_{>l}^{-1} \Psi_{\le l} \Sigma_{\le l}^{1/2}\right)^{-1}  \Sigma_{\le l}^{-1/2}  \theta^{*}_{\le l}  \right\|_{2}^{2} \notag \\
        &= \left\| \frac{1}{n} \Sigma_{> l} \Psi^{\top}_{> l}  K_{>l}^{-1} \Psi_{\le l}  \left( \frac{\Sigma_{\le l}^{-1}}{n} + \frac{\Psi_{\le l}^{\top} K_{>l}^{-1} \Psi_{\le l}}{n}  \right)^{-1}  \Sigma_{\le l}^{-1}  \theta^{*}_{\le l}  \right\|_{2}^{2}. \notag
    \end{align}
    Using Lemma \ref{lemma lambda max min} again, we have
    \begin{align}
        B_{2,2} &\asymp  \left\| \frac{1}{n} \Sigma_{> l} \Psi^{\top}_{> l}  K_{>l}^{-1} \Psi_{\le l} \Sigma_{\le l}^{-1}  \theta^{*}_{\le l}  \right\|_{2}^{2} \notag \\
        &\le \left\| \Sigma_{>l} \right\|_{\mathrm{op}} \left\| \frac{1}{n} \Sigma_{> l}^{1/2} \Psi^{\top}_{> l}  K_{>l}^{-1} \Psi_{\le l} \Sigma_{\le l}^{-1}  \theta^{*}_{\le l}  \right\|_{2}^{2} \notag \\
        &\le \left\| \Sigma_{>l} \right\|_{\mathrm{op}} \frac{1}{n^{2}} \left(  \Sigma_{\le l}^{-1}  \theta^{*}_{\le l} \right)^{\top} \Psi_{\le l}^{\top} K_{>l}^{-1} \Psi_{>l} \Sigma_{> l} \Psi^{\top}_{> l}  K_{>l}^{-1} \Psi_{\le l} \left(\Sigma_{\le l}^{-1}  \theta^{*}_{\le l}\right) \notag \\
        &= n \left\| \Sigma_{>l} \right\|_{\mathrm{op}} \cdot \frac{1}{n^{2}} \left(  \Sigma_{\le l}^{-1}  \theta^{*}_{\le l} \right)^{\top} \cdot \frac{\Psi_{\le l}^{\top}  K_{>l}^{-1} \Psi_{\le l} }{n} \cdot \left(\Sigma_{\le l}^{-1}  \theta^{*}_{\le l}\right).
    \end{align}
    Lemma \ref{lemma inner eigen} shows that $n \left\| \Sigma_{>l} \right\|_{\mathrm{op}} \asymp n d^{-(l+1)} = o(1) $. Using Eq.\eqref{eq k ge l in lemma} in Lemma \ref{lemma psi psi top} and Lemma \ref{lemma psi top psi}, we have
    \begin{align}
        B_{2,2} = o_{\mathbb{P}}(1) \cdot \frac{1}{n^{2}} \left\| \Sigma_{\le l}^{-1}  \theta^{*}_{\le l} \right\|_{2}^{2} = o_{\mathbb{P}} \left( d^{(2-\tilde{s})l-2 \gamma} \right), \notag
    \end{align}
    where we use Lemma \ref{lemma cal of bias 1} for the lase equality. We finish the proof.
\end{proof}

\begin{lemma}\label{lemma B23}
Define $B_{2,3} = \left\|  \Sigma_{>l} \Psi^{\top}_{> l} K^{-1} \Psi_{> l} \theta^{*}_{> l} \right\|_{2}^{2}$. Under assumptions of Theorem \ref{theorem bias}, we have
    \begin{displaymath}
        B_{2,3} = o_{\mathbb{P}} \left( d^{-(l+1)s} \right),
    \end{displaymath}
    where $o_{\mathbb{P}}$ only involves constants depending on $s, \gamma, R_{\gamma}$, $c_{0}, c_{1}$ and $ c_{2}$.
\end{lemma}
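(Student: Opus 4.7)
The plan is to rewrite $B_{2,3}$ as a sandwich quadratic form, strip off the matrix $\Psi_{>l}\Sigma_{>l}^{2}\Psi_{>l}^{\top}$ using Lemma \ref{lemma psi psi top} (Eq.\eqref{eq k2 ge l in lemma}), and then control the remaining vector by $\|\Psi_{>l}\theta^{*}_{>l}\|_{2}^{2}$ via the lower bound on $\lambda_{\min}(K)$ already established in the proof of Theorem \ref{theorem variance}.

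First I would set $v = K^{-1}\Psi_{>l}\theta^{*}_{>l} \in \mathbb{R}^{n}$ and write
\begin{displaymath}
    B_{2,3} \;=\; v^{\top}\bigl(\Psi_{>l}\Sigma_{>l}^{2}\Psi_{>l}^{\top}\bigr)v.
\end{displaymath}
Invoking Eq.\eqref{eq k2 ge l in lemma} gives $\Psi_{>l}\Sigma_{>l}^{2}\Psi_{>l}^{\top} = \kappa_{2}(I_{n}+\Delta_{2})$ with $\|\Delta_{2}\|_{\mathrm{op}} = o_{\mathbb{P}}(1)$, so $B_{2,3} \asymp \kappa_{2}\,\|v\|_{2}^{2}$.

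Next, using the bound $\lambda_{\min}(K)\geq \lambda_{\min}(\kappa_{1}(I_{n}+\Delta_{1})) = \Theta_{\mathbb{P}}(1)$ (which is exactly what was shown when bounding $V_{1}$ in the proof of Theorem \ref{theorem variance}), I would get
\begin{displaymath}
    \|v\|_{2}^{2} \;\leq\; \lambda_{\min}(K)^{-2}\,\|\Psi_{>l}\theta^{*}_{>l}\|_{2}^{2} \;=\; O_{\mathbb{P}}\!\bigl(\|\Psi_{>l}\theta^{*}_{>l}\|_{2}^{2}\bigr).
\end{displaymath}
Lemma \ref{lemma ge l l2} then gives $\|\Psi_{>l}\theta^{*}_{>l}\|_{2}^{2} = O_{\mathbb{P}}(n\|\theta^{*}_{>l}\|_{2}^{2})$, and Lemma \ref{lemma cal of bias 2} yields $\|\theta^{*}_{>l}\|_{2}^{2} = \Theta(d^{-(l+1)s})$. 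Combining these ingredients with $\kappa_{2} = \Theta(d^{-(l+1)})$ from Lemma \ref{lemma rate of k1 k2},
\begin{displaymath}
    B_{2,3} \;=\; O_{\mathbb{P}}\!\bigl(\kappa_{2}\cdot n\cdot d^{-(l+1)s}\bigr) \;=\; O_{\mathbb{P}}\!\bigl(d^{\gamma-l-1}\cdot d^{-(l+1)s}\bigr).
\end{displaymath}

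Finally, the crucial observation — and essentially the only place where the hypothesis $\gamma \notin \mathbb{N}^{+}$ enters — is that $l = \lfloor\gamma\rfloor$ satisfies $l < \gamma < l+1$, so the extra factor $d^{\gamma-l-1}$ is $o(1)$. This upgrades the $O_{\mathbb{P}}$ to $o_{\mathbb{P}}(d^{-(l+1)s})$, as required. There is no real obstacle here: every estimate needed has been proved earlier in the paper, and the argument is a straightforward chaining of Lemmas \ref{lemma psi psi top}, \ref{lemma rate of k1 k2}, \ref{lemma ge l l2}, and \ref{lemma cal of bias 2}, together with the $\lambda_{\min}(K) = \Theta_{\mathbb{P}}(1)$ bound already extracted inside the proof of Theorem \ref{theorem variance}.
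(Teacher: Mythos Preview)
Your proof is correct and reaches the same final estimate $B_{2,3} = O_{\mathbb{P}}(d^{\gamma-l-1}\cdot d^{-(l+1)s}) = o_{\mathbb{P}}(d^{-(l+1)s})$ as the paper, but the mechanism for extracting the small prefactor differs slightly. You write $B_{2,3} = v^{\top}\bigl(\Psi_{>l}\Sigma_{>l}^{2}\Psi_{>l}^{\top}\bigr)v$ and invoke Eq.\eqref{eq k2 ge l in lemma} directly to replace the inner matrix by $\kappa_{2}(I_{n}+\Delta_{2})$, then bound $\|v\|_{2}^{2}$ via $\lambda_{\min}(K)^{-2}$. The paper instead splits $\Sigma_{>l}^{2} = \Sigma_{>l}^{1/2}\,\Sigma_{>l}\,\Sigma_{>l}^{1/2}$, pulls out $\|\Sigma_{>l}\|_{\mathrm{op}}$, and then uses the operator inequality $\Psi_{>l}\Sigma_{>l}\Psi_{>l}^{\top} = K_{>l} \preceq K$ to collapse $K^{-1}K_{>l}K^{-1}$ to $K^{-1}$, finishing with $\lambda_{\max}(K_{>l}^{-1}) = \Theta_{\mathbb{P}}(1)$ from Eq.\eqref{eq k ge l in lemma}. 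Both routes produce the same factor $n\,d^{-(l+1)}$ in front of $\|\theta^{*}_{>l}\|_{2}^{2}$ (yours as $n\kappa_{2}$, the paper's as $n\|\Sigma_{>l}\|_{\mathrm{op}}$), and both then finish identically via Lemma~\ref{lemma ge l l2} and Lemma~\ref{lemma cal of bias 2}. Your version is arguably a touch more direct since it exploits Eq.\eqref{eq k2 ge l in lemma}, which the paper stated but did not use here; the paper's version avoids citing the $\lambda_{\min}(K)$ bound from the variance proof and is self-contained within the bias analysis.
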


\begin{proof}
First, we have
    \begin{align}
        B_{2,3} &= \left\| \Sigma_{>l}^{1/2} \Sigma_{>l}^{1/2} \Psi^{\top}_{> l} K^{-1} \Psi_{> l} \theta^{*}_{> l} \right\|_{2}^{2} \notag \\
        &\le \left\| \Sigma_{>l} \right\|_{\mathrm{op}}  \left( \Psi_{> l}  \theta^{*}_{> l} \right)^{\top} K^{-1}   \Psi_{> l} \Sigma_{>l} \Psi_{> l}^{\top} K^{-1} \left( \Psi_{> l}  \theta^{*}_{> l}\right) \notag \\
        &\le \left\| \Sigma_{>l} \right\|_{\mathrm{op}}  \left( \Psi_{> l}  \theta^{*}_{> l} \right)^{\top} K^{-1} \left( \Psi_{> l}  \theta^{*}_{> l}\right) \notag \\
        &\le n \left\| \Sigma_{>l} \right\|_{\mathrm{op}} \cdot \lambda_{\text{max}}(K_{>l}^{-1}) \cdot \frac{1}{n} \left\| \Psi_{> l}  \theta^{*}_{> l} \right\|_{2}^{2}. \notag
    \end{align}
    where we use the fact that $K- \Psi_{> l} \Sigma_{>l} \Psi_{> l}^{\top} $ is a positive definite matrix for the second inequality. Using Lemma Eq.\eqref{eq k ge l in lemma} in \ref{lemma psi psi top}, Lemma \ref{lemma ge l l2} and the fact that $n \left\| \Sigma_{>l} \right\|_{\mathrm{op}} \asymp n d^{-(l+1)} = o(1) $, we prove that
    \begin{displaymath}
        B_{2,3} = o_{\mathbb{P}}(1) \cdot \| \theta^{*}_{>l} \|_{2}^{2} = o_{\mathbb{P}} \left( d^{-(l+1)s} \right),
    \end{displaymath}
    where we use Lemma \ref{lemma cal of bias 2} for the last equality. We finish the proof.
\end{proof}
The main idea of Lemma \ref{lemma B22}, \ref{lemma B23} is from \cite[Lemma 14]{barzilai2023generalization}. By comparison, we prove the $ o_{\mathbb{P}}$ bounds instead of $ O_{\mathbb{P}}$ by making more elaborate calculations under the assumption of the inner product kernel.\\

The following identity is borrowed from Lemma 6 in \cite{aerni2022strong} and Lemma 20 in \cite{bartlett2020benign}, whose proof is mainly based on the Sherman–Morrison–Woodbury formula.
\begin{lemma}\label{lemma trans in V2}
Recall that $l = \lfloor \gamma \rfloor$. It always holds that
    \begin{displaymath}
        \Sigma_{\le l} \Psi_{\le l}^{\top} K^{-2} \Psi_{\le l} \Sigma_{\le l} = \left( \Sigma_{\le l}^{-1} + \Psi_{\le l}^{\top} K_{>l}^{-1} \Psi_{\le l} \right)^{-1}  \Psi_{\le l}^{\top} K_{>l}^{-2} \Psi_{\le l}  \left( \Sigma_{\le l}^{-1} + \Psi_{\le l}^{\top} K_{>l}^{-1} \Psi_{\le l} \right)^{-1}.
    \end{displaymath}
\end{lemma}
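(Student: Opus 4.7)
The plan is to reduce the claim to a direct consequence of the Sherman--Morrison--Woodbury formula applied to the decomposition $K = K_{>l} + \Psi_{\le l} \Sigma_{\le l} \Psi_{\le l}^{\top}$. The strategy is to compute the ``one-sided'' factor $K^{-1} \Psi_{\le l} \Sigma_{\le l}$ explicitly and then obtain the claim by writing $\Sigma_{\le l} \Psi_{\le l}^{\top} K^{-2} \Psi_{\le l} \Sigma_{\le l}$ as the product of this factor with its transpose (noting that $K$ is symmetric).

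Concretely, I would first apply Sherman--Morrison--Woodbury to $(K_{>l} + \Psi_{\le l} \Sigma_{\le l}^{1/2} \cdot \Sigma_{\le l}^{1/2} \Psi_{\le l}^{\top})^{-1}$, exactly as in equation \eqref{eq b22 mid} from the proof of Lemma \ref{lemma B12}. This yields
\begin{equation*}
K^{-1} \Psi_{\le l} \Sigma_{\le l}^{1/2} = K_{>l}^{-1} \Psi_{\le l} \Sigma_{\le l}^{1/2} \bigl( \mathrm{I}_{B_l} + \Sigma_{\le l}^{1/2} \Psi_{\le l}^{\top} K_{>l}^{-1} \Psi_{\le l} \Sigma_{\le l}^{1/2} \bigr)^{-1}.
\end{equation*}
Multiplying by $\Sigma_{\le l}^{1/2}$ on the right and using the standard conjugation identity
\begin{equation*}
\Sigma_{\le l}^{1/2} \bigl( \mathrm{I}_{B_l} + \Sigma_{\le l}^{1/2} C \Sigma_{\le l}^{1/2} \bigr)^{-1} \Sigma_{\le l}^{1/2} = \bigl( \Sigma_{\le l}^{-1} + C \bigr)^{-1},
\end{equation*}
with $C = \Psi_{\le l}^{\top} K_{>l}^{-1} \Psi_{\le l}$, gives the clean identity
\begin{equation*}
K^{-1} \Psi_{\le l} \Sigma_{\le l} = K_{>l}^{-1} \Psi_{\le l} \bigl( \Sigma_{\le l}^{-1} + \Psi_{\le l}^{\top} K_{>l}^{-1} \Psi_{\le l} \bigr)^{-1}.
\end{equation*}

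To finish, I would transpose this identity (both $K$ and $\Sigma_{\le l}^{-1} + \Psi_{\le l}^{\top} K_{>l}^{-1} \Psi_{\le l}$ are symmetric, so the transpose is painless), then multiply the transposed version on the left with the original on the right:
\begin{equation*}
\Sigma_{\le l} \Psi_{\le l}^{\top} K^{-2} \Psi_{\le l} \Sigma_{\le l} = \bigl( \Sigma_{\le l} \Psi_{\le l}^{\top} K^{-1} \bigr) \bigl( K^{-1} \Psi_{\le l} \Sigma_{\le l} \bigr),
\end{equation*}
which immediately collapses to the desired expression with $\Psi_{\le l}^{\top} K_{>l}^{-2} \Psi_{\le l}$ sandwiched between the two inverse factors.

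There is essentially no obstacle here: the content is purely algebraic. The only point requiring minor care is verifying that $\Sigma_{\le l}$ is invertible so that the square-root conjugation identity above is legitimate, which follows from Assumption \ref{assumption inner product kernel} ($a_j > 0$ for all $j$, so each $\mu_k > 0$) and the fact that the $B_l \times B_l$ block $\Sigma_{\le l}$ is a positive diagonal matrix. Everything else is bookkeeping, and the derivation is self-contained once \eqref{eq b22 mid} is in hand.
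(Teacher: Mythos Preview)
Your proposal is correct and follows essentially the same route the paper indicates: the paper does not spell out a proof but cites \cite{aerni2022strong,bartlett2020benign} and notes that the identity follows from the Sherman--Morrison--Woodbury formula, which is exactly what you do (and indeed your key step reproduces the computation \eqref{eq b22 mid} already carried out in the proof of Lemma~\ref{lemma B12}). The only small addition worth making explicit is that the identity is asserted wherever the inverses exist, so you should also note that $K_{>l}$ (and hence $K$) is invertible in the regime of interest---this is immediate from Lemma~\ref{lemma psi psi top}.
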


% The following lemma is an intermediate step in the proof of Lemma \ref{lemma B12}.
% \begin{lemma}\label{lemma T2 equals}
% Let $c_{1} d^{\gamma} \le n \le c_{2} d^{\gamma} $ for some fixed $ \gamma \in (0,\infty) \backslash \mathbb{N}$ and absolute constants $ c_{1}, c_{2}$. Recall that $l = \lfloor \gamma \rfloor$, then we have
%     \begin{equation}
%         n K^{-1} \Psi_{\le l} \Sigma_{\le l}^{2} \Psi_{\le l}^{\top} K^{-1}  =  \frac{\Psi_{\le l} S^{2} \Psi_{\le l}^{\top}}{n} + \Delta_{2}, ~~ \left\| \Delta_{2} \right\|_{\mathrm{op}} = o_{\mathbb{P}}(1),
%     \end{equation}
%     where $S = \left[ \mathrm{I}_{B_{l}} + \kappa_{1} \left( n \Sigma_{\le l}\right)^{-1} \right]^{-1} $ and $o_{\mathbb{P}}$ only involves constants depending on $\gamma, c_{1}$ and $c_{2}$.
% \end{lemma}
% \begin{proof}
%     Deferred to Appendix 3.
% \end{proof}

\section*{Acknowledgments}
This work was supported in part by the National Natural Science Foundation of China and Beijing Natural Science Foundation.

%Bibliography
% \bibliographystyle{unsrt}  
\bibliographystyle{plainnat}
\bibliography{references}

\end{document}